\newtheorem{proposition}{Proposition}
\newtheorem{corollary}{Corollary}
\newtheorem{assumption}{Assumption}
\newtheorem{definition}{Definition}
\newtheorem{lemma}{Lemma}
\newtheorem{remark}{Remark}
\newtheorem*{informaltheorem}{Proposition}
\title{Multi-View Graph Learning with Graph-Tuple}
\author{%
Shiyu Chen \\
Department of Applied Mathematics and Statistics \\
Johns Hopkins University \\
\texttt{schen355@jh.edu} 
\And
  % examples of more authors
  % \And
  Ningyuan (Teresa) Huang \\
  Flatiron Institute \\
  \texttt{thuang@flatironinstitute.org} \\
   \AND
   Soledad Villar \\
  Department of Applied Mathematics and Statistics \\
  Johns Hopkins University \\
  and Flatiron Institute\\
  \texttt{soledad.villar@jhu.edu}\\
  % \And
  % Coauthor \\
  % Affiliation \\
  % Address \\
  % \texttt{email} \\
  % \And
  % Coauthor \\
  % Affiliation \\
  % Address \\
  % \texttt{email} \\
}
\newcommand{\blind}{0}
\begin{document}

\maketitle

\begin{abstract}
Graph Neural Networks (GNNs) typically scale with the number of graph edges, making them well suited for sparse graphs but less efficient on dense graphs, such as point clouds or molecular interactions. A common remedy is to sparsify the graph via similarity thresholding or distance pruning, but this forces an arbitrary choice of a single interaction scale and discards crucial information from other scales. To overcome this limitation, we introduce a multi-view graph-tuple framework. Instead of a single graph, our graph-tuple framework partitions the graph into disjoint subgraphs, capturing primary local interactions and weaker, long-range connections. We then learn multi-view representations from the graph-tuple via a heterogeneous message-passing architecture inspired by the theory of non-commuting operators, which we formally prove is strictly more expressive and guarantees a lower oracle risk compared to single-graph message-passing models. We instantiate our framework on two scientific domains: molecular property prediction from feature-scarce Coulomb matrices and cosmological parameter inference from geometric point clouds. On both applications, our multi-view graph-tuple models demonstrate better performance than single-graph baselines, highlighting the power and versatility of our multi-view approach.

% Our framework offers a principled methodology for applying GNNs to a wide range of non-native graph data, from molecular interactions to point clouds.
\end{abstract}

\section{Introduction}
Graph neural networks (GNNs) have demonstrated remarkable success in learning from structured data~\cite{bronstein2017geometric}, achieving state-of-the-art results across diverse fields such as social network analysis, recommendation systems, drug discovery, and materials science~\citep{kipf2016semi, hamilton2017inductive, he2020lightgcn, gilmer2017neural, xie2018crystal}. The power of GNNs stems from their ability to learn rich representations of nodes and entire graphs by iteratively passing and aggregating messages over a relational structure~\citep{gilmer2017neural, xu2018powerful}. This core mechanism endows them with a strong relational inductive bias~\citep{battaglia2018relational}: the inherent assumption that an object’s properties are determined by its connections and local context. This bias is precisely why GNNs are so effective for tasks on graph-structured data~\citep{kipf2016semi}.

%However, most GNNs rely on a predefined (and often discrete) graph. In practice, many datasets do not have an inherent graph structure, necessitating the step of graph construction. 
Computationally, GNNs typically scale with the number of graph edges, making them efficient for sparse graphs. However, this efficiency degrades as the graphs become denser. This poses a particular challenge on large, dense graphs such as fully-connected distance graphs derived from point clouds. %that GNNs do not scale well with very dense graphs. For example, Euclidean distances in point clouds or Coulomb interaction energies in molecules. In such cases, this results in a dense, almost fully connected network. %Applying GNNs directly to such dense graphs can overwhelm the message-passing mechanism with weak, noisy edges, diminish the principle of locality, and ultimately degrade model performance.% \tes{Can generalize this issue to the fact that GNNs do not scale well with very dense graphs, which include the special case for point clouds that could lead to a fully-connected graphs.}
To make GNN training efficient on dense graphs, a common way is to sparsify it by applying similarity thresholding or distance pruning  \cite{huang2025cosmobench}. Yet, this often reduces information and results in graph representations based on a single fixed scale of interaction. For example, a high threshold applied to a molecule's Coulomb matrix retains only strong chemical bonds at the expense of losing important weaker connections. Conversely, a low threshold preserves these weaker connections, but also introduces significant noise. Alternatively, invariant feature models~\cite{blum2024learning} reduce the computational cost to linear in the number of points, by exploiting the low-rank structure of the point cloud and allowing exact reconstruction of the full adjacency matrix from a small submatrix and anchor points.
%bypass graph construction by operating on the entire set of interactions. While this approach effectively captures global information, it comes at the cost of sacrificing local topology. Specifically, by treating interactions as an unordered set, these models lose information about multi-hop paths, cycles, and motifs, which are fundamental properties encoded in a graph's structure and leveraged by GNNs.}

%Beyond single-threshold sparsification and invariant feature models, 
A complementary line of works avoids graph sparsification and low-rank assumptions, by decomposing a single dense graph into multiple sparser graphs and then learning them in parallel, such as multi-view methods ~\cite{yun2019graph, hassani2020contrastive, park2020unsupervised} and heterogeneous GNNs~\cite{schlichtkrull2018modeling, wang2019heterogeneous}. These approaches preserve diverse interaction ranges while being computationally tractable, yet are typically designed for heterogeneous graphs with multiple node types or edges types, not directly applicable on homogeneous graphs with continuous edge features. %nor do they typically instantiate explicit cross-strength paths. 

To tackle these challenges, we propose a multi-view graph representation that captures both fine-grained and contextual interactions. Instead of a single graph, we construct a graph tuple over the same nodes by explicitly partitioning edges according to interaction strength (e.g., distance or Coulomb energy): a strong-connection graph retaining the strongest local interactions and a complementary weak-connection graph providing broader context. Inspired by the theoretical insights of the GtNN framework~\cite{velasco2024graph}, we then explicitly integrate multiple message-passing operations in a single layer: intra-scale operations (within each graph view) and, crucially, inter-scale operations that model the distinct operator orderings (across different graph views). This yields an interpretable and physically grounded mechanism that links local topology to global effects. We prove that under mild assumptions, this heterogeneous message-passing architecture is more expressive than single-graph models and guaranteed to achieve a lower or equal oracle prediction risk.

We instantiate our framework on two scientific domains. For molecular property prediction on the QM7b dataset, we develop GINE-Gt, a specialized architecture that uses the powerful Graph Isomorphism Network with Edge Features(GINE)~\cite{hu2019strategies} as the backbone. Second, for cosmological parameter inference from point cloud data, we develop EGNN-Gt based on Equivariant Graph Neural Network (EGNN)~\cite{satorras2021n}, a powerful GNN architecture that guarantees equivariance to rotations, translations, and reflections.

The empirical results demonstrate the efficacy of our framework. In QM7b, GINE-Gt outperforms invariant-feature models and a suite of single-graph GNN baselines in most prediction targets. In the cosmological simulations from the CAMELS suite, EGNN-Gt demonstrates superior overall performance over its corresponding single-graph counterparts across a wide range of interaction radii. These results not only highlights the power of our multi-view strategy but also demonstrates the potential of our multi-view graph tuple framework for a broader range of applications involving continuous relational data. The source code is available on Github\footnote{https://github.com/chenshy202/Multi-View-Graph-Learning/tree/main}.

\section{Related Work}
Our work lies at the intersection of heterogeneous graph learning that process graphs with different typed nodes or edges, and multi-view representation learning to extract features from different scales. %induces relational structure from a single homogeneous graph.

\textbf{Heterogeneous graph learning.} Early heterogeneous GNNs such as R-GCN~\cite{schlichtkrull2018modeling} and HAN~\cite{wang2019heterogeneous} demonstrated the benefit of relation-specific message-passing design, but they assume pre-defined, \emph{discrete} node and edge types (e.g., knowledge graphs or bibliographic networks). To go beyond pre-defined relations, Graph Transformer Networks (GTN)~\cite{yun2019graph} proposed to softly select relation-specific adjacency matrices and then generate new graphs by their matrix products. We generalize this heterogeneous graph learning paradigm to homogeneous graphs, by inducing different relations via partitioning \emph{continuous} edge features (e.g., physical distances or chemical interactions). %rather than categorical types. %, so such assumptions do not hold directly.

\textbf{Multi-view graph learning.} A recent line of work constructs multiple relational views from a single graph, motivated from self-supervising learning %Graph Transformer Networks (GTN)~\cite{yun2019graph} learn soft meta-paths and thereby generate several differentiable adjacency matrices on which message passing is performed. 
(e.g., contrastive multi-view learning \cite{hassani2020contrastive}) or community detection (e.g., variational edge partition model \cite{he2022variational}). %use different structural views of graphs to perform self-supervised learning.
%and multiplex embedding approaches like DMGI~\cite{park2020unsupervised}process view-specific propagations and then fuse the embeddings, showing that deriving and combining several graphs from one source can markedly improve representation quality. 
%Closer to the supervised setting, the Variational Edge Partition Model~\cite{he2022variational} learns to split edges into latent relation sets and applies separate GNN models to each. 
In contrast to these prior works, we are motivated to construct multiple views based on a physical measure of interaction strength from scientific applications (e.g., the Coulomb matrix in the molecular domain, and the Euclidean distance matrix in the cosmological applications). %This approach is simple, transparent, and particularly well-suited for scientific data where such physical priors are readily available.

\textbf{Multi-scale GNNs.} Our approach is architecturally most related to multi-scale GNNs that learn a hierarchical representation from a graph, such as FraGAT~\cite{zhang2021fragat} designed for molecular property prediction,% computes atom-level and fragment-level message passing in tandem; in physics, 
and MultiScale MeshGraphNets~\cite{fortunato2022multiscale} for physics simulation. %propagate on fine and coarse meshes concurrently to capture short- and long-range effects. 
But these methods typically build multiple graphs on different node sets (e.g., atoms vs. fragments and fine vs. coarse mesh), which requires non-trivial cross-level alignment and transfer operators. In contrast, our framework defines multiple graphs over the same node set by partitioning a continuous interaction strength, enabling simple and interpretable within- and between-graph message passing.

\section{Preliminaries}
In this work, we consider graphs denoted as $\mathcal{G} = (\mathcal{V}, \mathcal{E})$, where $\mathcal{V}=\{v_1, \dots, v_n\}$ is a set of $n$ nodes and $\mathcal{E} \subseteq \mathcal{V} \times \mathcal{V}$ is a set of edges. Each node $v_i$ is associated with an initial feature vector, and each edge $(i,j)$ with an initial feature vector. Before being processed by any network layers, initial node and edge features are projected into a hidden space via learnable encoders. For notational simplicity, we let $h_i^{(l)}$ denote the encoded feature vector for node $i$ at layer $l$, and we let $e_{ij}$ denote the encoded feature for the edge $(i,j)$. We collectively represent all node features at a given layer $l$ as a matrix $H^{(l)}$ and all edge features as a matrix $E$.

% \subsection{Graph Isomorphism Network (GIN)}
% GIN updates each node $v$ by summing neighbour features and applying an MLP:
% \begin{equation}
% h_i^{(l+1)} =MLP^{(l)}\!\Bigl((1+\varepsilon)\,h_i^{(l)} \;+\!\! \sum_{j\in\mathcal N(i)} h_j^{(l)}\Bigr)
% \end{equation}
% where $h_v^{(l)}$ is the feature of node $v$ from layer $l$, $\epsilon$ is a scalar (fixed or learned) and $\mathcal{N}(v)$ denotes the neighbours of $v$. The summation makes the update permutation‑invariant. After $L$ iterations, a readout function generates a graph-level representation $h_G$:
% \begin{equation}
%     h_G \;=\; \mathrm{READOUT}\!\Bigl(\{\,h_v^{(L)} \mid v \in G\,\}\Bigr).
% \end{equation}
% where $\mathrm{READOUT}$ can be a simple permutation invariant function such as summation or a more sophisticated graph-level pooling function~\cite{ying2018hierarchical, zhang2018end}.

\subsection{Graph Isomorphism Network with Edge Features (GINE)}
GINE~\cite{hu2019strategies} extends Graph Isomorphism Network (GIN) by incorporating edge features into its message passing procedure:
\begin{equation}
h_i^{(l+1)} =MLP^{(l)}\!\Bigl((1+\varepsilon)\,h_i^{(l)} \;+\!\! \sum_{j\in\mathcal N(i)} \text{ReLU}\left(h_j^{(l)} + e_{ij}\right)\Bigr)
\end{equation}
The entire single-layer update process can then be concisely expressed as:
\begin{equation}
    H^{(l+1)} = \text{GINEConv}\left(H^{(l)}, \mathcal{E}, E\right)
    \label{eq:gine_operator}
\end{equation}

\subsection{Equivariant Graph Neural Networks (EGNN)}

Equivariant Graph Neural Networks (EGNNs) \cite{satorras2021n} incorporate geometric information by endowing each node with Euclidean coordinates $x_i \in \mathbb{R}^d$. They jointly update node features and coordinates in a way that is equivariant to node permutations and Euclidean isometries, i.e., translations, rotations (and reflection). An EGNN convolution layer (EGCL) is defined as follows:
\begin{align}
     &m_{ij} = \phi_e\left(h_i^{(l)},\, h_j^{(l)},\, \|x_i^{(l)}-x_j^{(l)}\|^2,\, a_{ij}\right), \\
    &x_i^{(l+1)} = x_i^{(l)} + C \sum_{j\in\mathcal{N}(i)} (x_i^{(l)}-x_j^{(l)})\,\phi_x(m_{ij}), \\
    &h_i^{(l+1)} = \phi_h\left(h_i^{(l)},\, \sum_{j\in\mathcal{N}(i)} m_{ij}\right).
\end{align}
Here, $\phi_e, \phi_x, \phi_h$ are learnable functions (e.g., MLPs), $C$ is a scalar and $a_{ij}$ represents optional edge attributes. In this work, we simply use edge features, i.e., $a_{ij} = e_{ij}$. We denote the computation of the EGNN convolution layer as
%From an operator perspective, the entire layer can be written as:
\begin{equation}
    (H^{(l+1)}, X^{(l+1)}) = \mathrm{EGCL}(H^{(l)}, X^{(l)}, \mathcal{E}, E).
\end{equation}

\begin{figure}[htbp]
    \centering
    \includegraphics[width=0.95\textwidth]{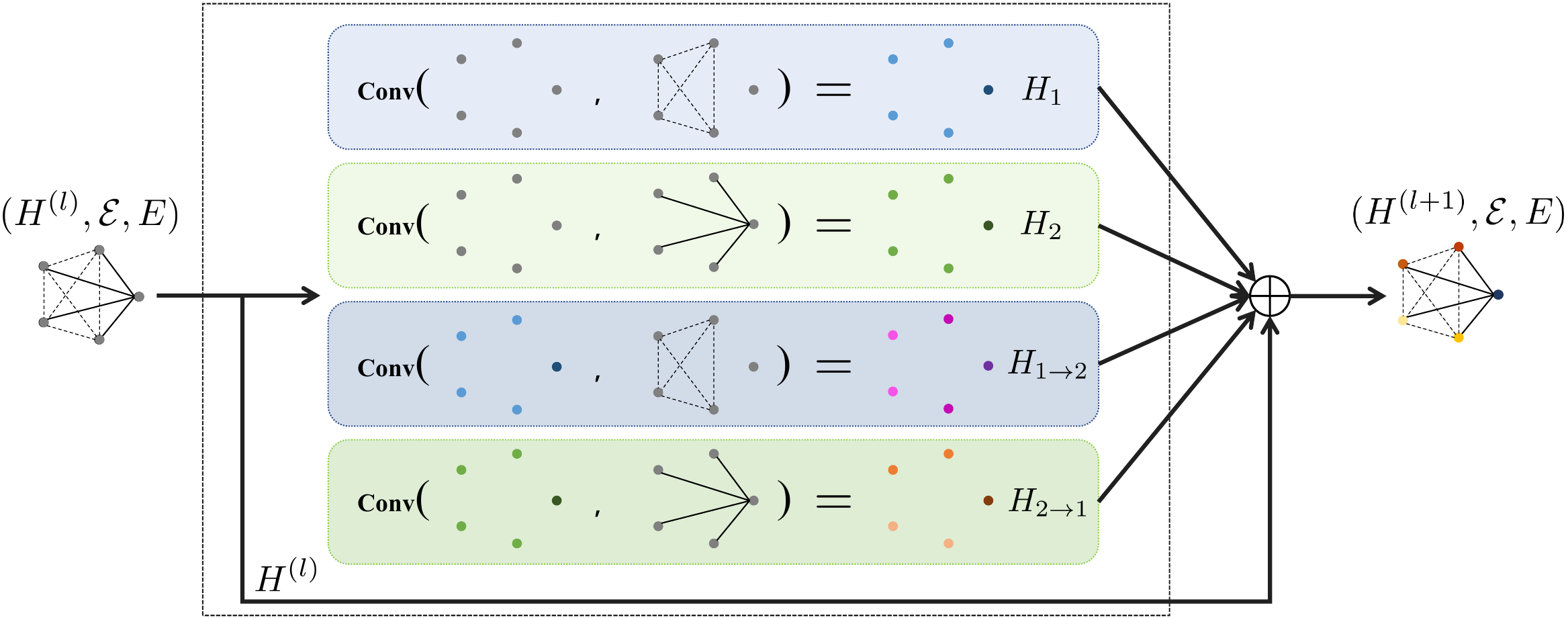} 
    \caption{An illustration of our heterogeneous message-passing architecture for $k=2$ views. The node embeddings $H_1, H_2$ are obtained from message-passing within each graph (with the corresponding edges and edge features); The node embeddings $H_{1 \to 2}, H_{2 \to 1}$ are computed from message-passing across views, one for each direction. These embeddings are then aggregated per ~\ref{eq:gt_update}.}
    % applies message-passing on the strong-connection graph (connected by solid lines) with $H^{(l)}$. $H_{1 \to 2}$ applies message-passing on the weak-connection graph (connected by dash line) with $H_1$. $H_2$ and $H_{2 \to 1}$ follow analogously.} 
    \label{fig:framework} 
\end{figure}

\section{Method}
Our work introduces a multi-view graph-tuple framework for learning from complex relational systems where interactions occur at different scales. The core principle is to model these interactions and learn how information flows both within and between these scales. This is achieved by decomposing a graph $\mathcal{G}$ into a graph tuple, $(\mathcal{G}_1, \mathcal{G}_2, \ldots, \mathcal{G}_k)$, that provides distinct yet complementary views of the interaction space, representing fine-grained local structures and broader contextual relationships. This graph-tuple framework can be instantiated using different graph neural network backbones. We present two such instantiations: an edge-aware model for attributed graphs, GINE-Gt based on~\cite{hu2019strategies}, and an equivariant extension for geometric data, EGNN-Gt~\cite{satorras2021n}. 
\subsection{Multi-view Graph-tuple Representation}

We begin by decomposing a single graph into multiple views (subgraphs), represented as a graph tuple $\mathcal{G} = (\mathcal{G}_1, \ldots, \mathcal{G}_k)$. Each graph $\mathcal{G}_i = (\mathcal{V}, \mathcal{E}_i)$ is defined on the same node set, but with disjoint edges sets to capture different interaction scales, namely $\bigcup_{i=1}^k \mathcal{E}_i = \mathcal{E}$ and $\mathcal{E}_i \bigcap \mathcal{E}_j = \emptyset $ for $i\neq j$. For example, when $k=2$, we can decompose $\mathcal{G}$ (with nonnegative edge weights) into a strong-connection graph $\mathcal{G}_1$ containing the largest-magnitude edges (greater than certain threshold $\tau$), and a weak-connection graph $\mathcal{G}_2$ capturing the remaining connections. 
% \begin{itemize}
%     \item $\mathcal{G}_1 = (\mathcal{V}, \mathcal{E}_1)$ is the strong-connections graph.
%     \item $\mathcal{G}_2 = (\mathcal{V}, \mathcal{E}_2)$ is the graph capturing the remaining weaker connections.
% \end{itemize}
The precise instantiation of these graphs is detailed in Section~\ref{sec: Molecular Property Prediction} and~\ref{sec: Cosmological Parameter Inference}.

\subsection{Heterogeneous Message-Passing Architecture}

To learn the multi-view representations from a graph tuple, we use a heterogeneous message-passing architecture, where each layer updates node representations by computing and integrating information from multiple distinct message-passing operations. The final update is defined as a residual combination of these multi-view representations, governed by learnable scalar weights $c_k$:
\begin{equation}
\label{eq:gt_update}
H^{(l+1)} = H^{(l)} + \sum_{i=1}^k c_i \cdot H_{i} + \sum_{i\neq j} \left( c_{ij} \cdot H_{i \to j} + c_{ji} \cdot H_{j \to i} \right), 
\end{equation}
%The design of these  interaction paths is the key to our model's expressive power. 
%Standard GNNs typically operate on a single graph, which can be viewed as repeatedly applying one graph operator. In contrast, our framework utilizes two distinct operators derived from our graph tuple: a strong-connections operator for $\mathcal{G}_1$ and a weak-connections operator for $\mathcal{G}_2$. This allows for a richer set of message-passing patterns.
where $H_i$ denotes the node embeddings from intra-scale message-passing within each graph $\mathcal{G}_i$, and $H_{i \to j}, H_{j \to i}$ denote the node embeddings from inter-scale message-passing across different graphs $\mathcal{G}_i, \mathcal{G}_j$. 
These representations are designed to capture distinct patterns. For example, given the graph-tuple with a strong-connection graph $\mathcal{G}_1$ and a weak-connection graph $ \mathcal{G}_2$, the intra-scale representations ($H_1, H_2$) extract local interactions from $\mathcal{G}_1$ and broader, long-range information from $\mathcal{G}_2$, respectively. Then these representations are fused in $H_{i \to j}, H_{j \to i}$ via inter-scale message-passing, which capture relational information that is sensitive to the order of operations. %Crucially, the inter-scale paths ($H_{1 \to 2}, H_{2 \to 1}$) model the sequential composition of these two operators. These paths correspond to distinct 2-hop interaction patterns (e.g., propagating first on $\mathcal{G}_1$ then on $\mathcal{G}_2$, versus the reverse order), allowing the model to capture relational information that is sensitive to the order of operations. \blue{(Remove?) As we will formally demonstrate in our theoretical analysis (Section~\ref{sec:theory_main}), it is precisely this ability to distinguish operator orderings that makes our framework fundamentally more expressive than standard single-graph GNNs.}

Figure~\ref{fig:framework} provides an overview of our framework. 
We note that if the original edge set $\mathcal{E}$ contains \emph{typed} edges, it can be naturally partitioned into homogeneous edge sets, one for each type, as done in R-GCN \cite{schlichtkrull2018modeling}. Our framework extends R-GCN from heterogeneous graphs to homogeneous graphs, replacing discrete edge types with partitions induced by continuous edge features. The architectures proposed below also extend the Graph Tuple Neural Network framework from \cite{velasco2024graph}.

\subsubsection{GINE-Gt}

For general graphs with node features and edge attributes, we implement our framework using the GINE layer \cite{hu2019strategies}. The intra-scale message-passing for $i=1, \ldots, k$ are computed as
\begin{equation}
   H_{i} = \text{GINEConv}_i\left(H^{(l)}, \mathcal{E}_i, E_i\right) .
\end{equation}
% \begin{align}
%     H_{1} &= \text{GINEConv}_1\left(H^{(l)}, \mathcal{E}_1, E^{(1)}\right),  \quad %\\
%     H_{2} = \text{GINEConv}_2\left(H^{(l)}, \mathcal{E}_2, E^{(2)}\right).
% \end{align}
The inter-scale message-passing are then computed by
\begin{equation}
   H_{i \to j} = \text{GINEConv}_{ij}\left(H_{i}, \mathcal{E}_j, E_{j}\right), \quad   H_{j \to i} = \text{GINEConv}_{ji}\left(H_{j}, \mathcal{E}_i, E_{i}\right),
\end{equation}
where $i\neq j$.
% \begin{align}
%     H_{1 \to 2} &= \text{GINEConv}_{12}\left(H_{1}, \mathcal{E}_2, E^{(2)}\right), \quad %\\
%     H_{2 \to 1} = \text{GINEConv}_{21}\left(H_{2}, \mathcal{E}_1, E^{(1)}\right).
% \end{align}
Each $\text{GINEConv}_k$ is a distinct function with its own parameter weights, allowing the model to learn specialized functions for each interaction type.

\subsubsection{EGNN-Gt}

For geometric data where node features represent point positions in $\mathbb R^d$, we provide an $E(d)$-equivariant implementation of our framework using the EGCL layer \cite{satorras2021n}. The overall feature update follows Eq.~\ref{eq:gt_update}, while the position feature update is analogously defined as:
\begin{equation}
\label{eq:x_update}
X^{(l+1)} = X^{(l)} + \sum_{i=1}^k c_i \Delta X_{i} + \sum_{i\neq j}\left( c_{ij} \Delta X_{i \to j} + c_{ji} \Delta X_{j \to i} \right).
\end{equation}
The representations $(H_k, \Delta X_k)$, which contain both feature updates and coordinate displacements, are all computed from a single, shared EGCL layer.

% The representations $(H_k, \Delta X_k)$, which contain both feature updates and coordinate displacements, are computed from distinct EGCL layers. Each$\text{EGCL}_k$ learns a specialized transformation for its corresponding interaction path.

The intra-scale representations are computed as
% \begin{align}
%     (H_1, \Delta X_1) &= \text{EGCL}_1\left(H^{(l)}, X^{(l)}, \mathcal{E}_1, E^{(1)}\right) \\
%     (H_2, \Delta X_2) &= \text{EGCL}_2\left(H^{(l)}, X^{(l)}, \mathcal{E}_2, E^{(2)}\right)
% \end{align}
\begin{equation}
(H_i, \Delta X_i) = \text{EGCL}\!\left(H^{(l)}, X^{(l)}, \mathcal{E}_i, E_{i}\right), \quad \text{for } i = 1, \ldots, k.
\end{equation}
Subsequently, the inter-scale representations are obtained using these intermediate outputs: for $i \neq j$,
\begin{align}
    (H_{i \to j}, \Delta X_{i \to j}) &= \text{EGCL}\left(H_i, X^{(l)} + \Delta X_i, \mathcal{E}_j, E_{j}\right); \\
    (H_{j \to i}, \Delta X_{j \to i}) &= \text{EGCL}\left(H_j, X^{(l)} + \Delta X_j, \mathcal{E}_i, E_{i}\right).
\end{align}
This formulation allows the EGNN-Gt layer to learn geometrically-aware representations from the multi-view interaction patterns.

\section{Expressivity}
\label{sec:theory_main}

We analyze our multi-view graph-tuple framework in a simplified linear setting to establish its expressivity and generalization properties. We consider $k=2$ and define the shift operators $S_1$ and $S_2$ to be the adjacency matrices $\mathcal{G}_1$ and $\mathcal{G}_2$ respectively.
We study three classes of linear graph filters: $H_1(m)$ the polynomials of degree $m$ in $S_1$, $H_0(m)$ the polynomials of degree $m$ in $S_1+S_2$ the adjacency of the dense graph $\mathcal G$, and our multi-view graph-tuple class $H_{\mathrm{Gt}}(m)$ of multivariate polynomials of degree $m$ in $(S_1,S_2)$. See Definition \ref{def:classes} in Appendix \ref{app:theory_details}. We show the following (see proofs in Appendix \ref{app:theory_details}):

\begin{informaltheorem}[Expressivity]\label{inf:expr}
For any degree bound $m$, $H_1(m)\subseteq H_{\mathrm{Gt}}(m)$ and $H_0(m)\subseteq H_{\mathrm{Gt}}(m)$; if $S_1S_2\neq S_2 S_1$ and $m\ge2$, then the latter inclusion is strict.
See Proposition~\ref{thm:expressivity} in Appendix~\ref{app:theory_details}.
\end{informaltheorem}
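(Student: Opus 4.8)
The plan is to split the statement into the two inclusions, which are routine, and the strictness, which carries the real content. Recall that $H_{\mathrm{Gt}}(m)$ is by definition the linear span of all non-commutative words $S_{i_1}S_{i_2}\cdots S_{i_r}$ with $i_t\in\{1,2\}$ and $r\le m$. For the first inclusion, any polynomial $\sum_{r=0}^m a_r S_1^{\,r}\in H_1(m)$ is already a linear combination of such words, namely those using only the letter $S_1$, so $H_1(m)\subseteq H_{\mathrm{Gt}}(m)$. For the second, I would expand each power by the non-commutative multinomial formula,
\[
(S_1+S_2)^r=\sum_{i_1,\dots,i_r\in\{1,2\}} S_{i_1}\cdots S_{i_r},
\]
which exhibits $(S_1+S_2)^r$ as a sum of words of length $r\le m$; hence every $p(S_1+S_2)\in H_0(m)$ lies in $H_{\mathrm{Gt}}(m)$, giving $H_0(m)\subseteq H_{\mathrm{Gt}}(m)$.

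For strictness, the plan is to produce an explicit witness in $H_{\mathrm{Gt}}(m)\setminus H_0(m)$. The natural candidate is the commutator $C:=S_1S_2-S_2S_1$. It is a combination of two words of length $2$, so $C\in H_{\mathrm{Gt}}(m)$ whenever $m\ge2$, and the hypothesis $S_1S_2\neq S_2S_1$ is exactly the assertion that $C\neq0$. It therefore remains to certify that $C\notin H_0(m)$, and for this I would exhibit a linear invariant that annihilates all of $H_0(m)$ yet is nonzero on $C$.

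The cleanest such invariant is the transpose. Since $\mathcal G_1,\mathcal G_2$ are undirected, $S_1$ and $S_2$ are symmetric, so $T:=S_1+S_2$ is symmetric and every power satisfies $T^r=(T^\top)^r=(T^r)^\top$; consequently every element $p(T)\in H_0(m)$ is a symmetric matrix. On the other hand $C^\top=(S_1S_2)^\top-(S_2S_1)^\top=S_2S_1-S_1S_2=-C$, so $C$ is skew-symmetric. Over $\mathbb R$ a matrix that is simultaneously symmetric and skew-symmetric must vanish, so the nonzero matrix $C$ cannot be symmetric and hence cannot lie in $H_0(m)$. Combining $C\in H_{\mathrm{Gt}}(m)$ with $C\notin H_0(m)$ yields the strict inclusion $H_0(m)\subsetneq H_{\mathrm{Gt}}(m)$.

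The inclusions are pure bookkeeping of monomials, so the main obstacle is the non-membership $C\notin H_0(m)$: the argument hinges entirely on finding the right order-reversing invariant. The transpose works precisely because the adjacency matrices are symmetric; abstractly, the correct object is the word-reversal anti-automorphism of the free algebra on $\{S_1,S_2\}$, which fixes every power of $S_1+S_2$ (each single letter is its own reversal) but sends $C$ to $-C$. If one wished to drop symmetry or work over a general field, the delicate point would be to ensure this reversal descends to a well-defined linear functional on the relevant matrix algebra; under the undirected (symmetric) assumption this is automatic, and once the invariant is in hand the strictness is immediate.
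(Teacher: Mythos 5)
Your proposal is correct, and the two inclusions plus the choice of witness (the commutator $[S_1,S_2]$) coincide exactly with the paper's argument: the paper also proves $H_0(m)\subseteq H_{\mathrm{Gt}}(m)$ via the noncommutative binomial expansion (its Lemma on $(S_1+S_2)^k=\sum_{|w|=k}w(S_1,S_2)$) and exhibits $[S_1,S_2]$ as an element of $H_{\mathrm{Gt}}(m)\setminus H_0(m)$. Where you genuinely diverge is in \emph{certifying} that the commutator is not in $H_0(m)$. The paper argues at the level of formal words: elements of $H_0(m)$ have coefficients tied across all words of the same length ($c_w=b_{|w|}$), so they cannot give $S_1S_2$ and $S_2S_1$ opposite coefficients. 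Taken literally, that reasoning lives in the free algebra; since $H_0(m)$ is a set of \emph{matrices}, one must still rule out an accidental matrix identity $q(S_1+S_2)=[S_1,S_2]$, which the paper does not explicitly do. Your transpose invariant closes exactly that gap: every element of $H_0(m)$ is a polynomial in the symmetric matrix $S_1+S_2$ and hence symmetric, while the nonzero commutator of symmetric matrices is skew-symmetric, so the two subspaces of matrices meet only in $0$. This makes your strictness argument airtight, at the cost of the (here harmless, since the graphs are undirected) extra hypothesis that $S_1,S_2$ are symmetric; the paper's coefficient-tying argument would, if supplemented by a linear-independence statement for the words as matrices, also cover non-symmetric shift operators. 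Your observation that the transpose is the word-reversal anti-automorphism in disguise is also consistent with the paper's later use of $[S_1,S_2]$ as an element of the orthogonal complement of $\overline{H_0(m)}$ in its risk-gap corollary.
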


\begin{informaltheorem}[Oracle risk dominance]\label{inf:risk}
For any $m$, $\inf_{g\in H_{\mathrm{Gt}}(m)}R(g)\le \inf_{q\in H_0(m)}R(q)$ and
$\inf_{g\in H_{\mathrm{Gt}}(m)}R(g)\le \inf_{p\in H_1(m)}R(p)$. Moreover, if the oracle predictor $M^\star$ lies outside the expressivity of the baseline class $H_0(m)$, the advantage is strict, and the performance gap is a strictly positive, quantifiable value. See Proposition~\ref{thm:quantitative_gap} in Appendix~\ref{app:theory_details}.
\end{informaltheorem}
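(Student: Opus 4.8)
The plan is to recast the oracle risk as a squared distance in a weighted Hilbert space of matrices, so that both inequalities and the quantitative gap reduce to elementary facts about orthogonal projection onto nested subspaces.

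First I would record that each of $H_1(m)$, $H_0(m)$, and $H_{\mathrm{Gt}}(m)$ is a \emph{linear} subspace of the space of $n\times n$ matrices: each is the linear span of a finite set of operator monomials (powers of $S_1$, powers of $S_1+S_2$, and words in $(S_1,S_2)$ of length at most $m$, respectively). For the squared-error risk with a positive-definite feature covariance $\Sigma$, the population risk is a strictly convex quadratic in the filter $g$, so it admits the exact decomposition $R(g)=R(M^\star)+\lVert g-M^\star\rVert_\Sigma^2$, where $M^\star$ is the unconstrained oracle minimizer and $\langle A,B\rangle_\Sigma=\operatorname{tr}(A\Sigma B^\top)$ is a genuine inner product. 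Consequently, for any subspace $\mathcal V$,
\[
\inf_{g\in\mathcal V}R(g)=R(M^\star)+\bigl\lVert M^\star-\Pi_{\mathcal V}M^\star\bigr\rVert_\Sigma^2 ,
\]
where $\Pi_{\mathcal V}$ is the $\Sigma$-orthogonal projector onto $\mathcal V$; the infimum is attained because $\mathcal V$ is finite-dimensional.

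Given this reduction, the two inequalities are immediate from the Expressivity result (Proposition~\ref{thm:expressivity}): since $H_0(m)\subseteq H_{\mathrm{Gt}}(m)$ and $H_1(m)\subseteq H_{\mathrm{Gt}}(m)$, the distance from $M^\star$ to the larger set cannot exceed the distance to either subset, which yields $\inf_{g\in H_{\mathrm{Gt}}(m)}R(g)\le\inf_{q\in H_0(m)}R(q)$ and, by the identical argument with $\Pi_1$, the analogous bound for $H_1(m)$. For the quantitative gap I would use the nested-subspace Pythagorean identity: because $H_0(m)\subseteq H_{\mathrm{Gt}}(m)$, the projectors satisfy $\Pi_0=\Pi_0\Pi_{\mathrm{Gt}}$, so
\[
\inf_{q\in H_0(m)}R(q)-\inf_{g\in H_{\mathrm{Gt}}(m)}R(g)=\bigl\lVert M^\star-\Pi_0 M^\star\bigr\rVert_\Sigma^2-\bigl\lVert M^\star-\Pi_{\mathrm{Gt}}M^\star\bigr\rVert_\Sigma^2=\bigl\lVert(\Pi_{\mathrm{Gt}}-\Pi_0)M^\star\bigr\rVert_\Sigma^2\ge 0 .
\]
This nonnegative, explicitly computable quantity is the promised gap: it is exactly the $\Sigma$-energy of the component of $M^\star$ lying in the new directions $H_{\mathrm{Gt}}(m)\ominus H_0(m)$ opened up by the non-commuting cross-terms.

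Strictness then amounts to showing $(\Pi_{\mathrm{Gt}}-\Pi_0)M^\star\neq 0$, i.e.\ that $M^\star$ has nonzero $\Sigma$-projection onto $H_{\mathrm{Gt}}(m)\ominus H_0(m)$. Under the clean hypothesis that the oracle is realizable by the richer class but not the baseline, $M^\star\in H_{\mathrm{Gt}}(m)\setminus H_0(m)$, one has $\Pi_{\mathrm{Gt}}M^\star=M^\star\neq\Pi_0M^\star$ and the gap collapses to $\lVert M^\star-\Pi_0M^\star\rVert_\Sigma^2>0$. The main obstacle is making the phrase ``$M^\star$ lies outside the expressivity of $H_0(m)$'' precise enough to force strictness: merely assuming $M^\star\notin H_0(m)$ does not suffice, since $M^\star-\Pi_0M^\star$ could in principle be $\Sigma$-orthogonal to all of $H_{\mathrm{Gt}}(m)$, making the two projections coincide and the gap vanish. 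I would therefore state the hypothesis as $\Pi_{\mathrm{Gt}}M^\star\notin H_0(m)$ (equivalently, a nonzero component along the new directions), and to certify that such oracles exist I would invoke the strict inclusion $H_0(m)\subsetneq H_{\mathrm{Gt}}(m)$ from Proposition~\ref{thm:expressivity}, valid when $S_1S_2\neq S_2S_1$ and $m\ge 2$: any $M^\star$ spanning a direction of $H_{\mathrm{Gt}}(m)\setminus H_0(m)$, e.g.\ one built from the commutator $S_1S_2-S_2S_1$, realizes a strictly positive gap. A secondary technical point is ensuring $\Sigma$ is positive definite so that $\langle\cdot,\cdot\rangle_\Sigma$ is nondegenerate and the projectors are well defined and unique; the degenerate case $\Sigma\succeq 0$ would require passing to the quotient by $\ker\Sigma$.
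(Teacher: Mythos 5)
Your proposal is correct and follows essentially the same route as the paper: a risk decomposition $R(M)=R(M^\star)+\lVert M-M^\star\rVert_{\Sigma,F}^2$ (the paper's Lemma~\ref{lem:risk_decomp}), the set inclusions from Proposition~\ref{thm:expressivity} for the two inequalities, and an orthogonal-projection/Pythagorean argument identifying the gap with $\lVert\Pi_{\mathcal{V}(m)^\perp}(M^\star)\rVert_{\Sigma,F}^2$. Your caveat that $M^\star\notin H_0(m)$ alone does not force strictness is well taken, and the paper resolves it exactly as you suggest: Assumption~\ref{as:data} places $M^\star\in\overline{H_{\mathrm{Gt}}(m)}$, so that $\mathrm{dist}_\Sigma(M^\star,\mathcal{U}(m))=0$ and ``outside the expressivity of $H_0(m)$'' becomes equivalent to a nonzero component in $\mathcal{V}(m)^\perp$ within $\mathcal{U}(m)$.
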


\begin{proof}[Proof Sketch]
It is easy to see that the single operator baseline models correspond to a special case of the multi-view graph-tuple class for a specific choice of coefficients. The risk dominance is a direct consequence of this expressivity gap.
\end{proof}

\section{Molecular Property Prediction}
\label{sec: Molecular Property Prediction}
\begin{table*}[htbp]
\centering
\fontsize{7.5pt}{9.0pt}\selectfont
\setlength{\tabcolsep}{3pt}
\caption{Performance comparison of our GINE-Gt with all baselines on the QM7b dataset. The result report the Mean Absolute Error (MAE) $\pm$ standard error over ten folds (lower is better). The best result in each column is highlighted in \textbf{bold}, and the second-best is in \textit{italics}. Our GINE-Gt is the top-performing method overall, while GINE-2 is the strongest among the single-graph baselines.}
\label{tab:mole_results}
\renewcommand{\arraystretch}{0.8} 
\begin{tabular}{l ccccccc} 
\toprule
\textbf{MAE $\downarrow$} & Atomization & Excitation & Absorption & HOMO & LUMO & 1st excitation & Ionization \\
 & PBE0 & ZINDO & ZINDO & ZINDO & ZINDO & ZINDO & ZINDO \\
\midrule
KRR \cite{wu2018moleculenet}  & 9.3 & 1.83 & 0.098 & 0.369 & 0.361 & 0.479 & 0.408 \\
DS-CI & $12.849 \pm 0.757$ & $1.776 \pm 0.069$ & $0.086 \pm 0.003$ & $0.401 \pm 0.017$ & $0.338 \pm 0.048$ & $0.492 \pm 0.058$ & $0.422 \pm 0.012$ \\
DTNN \cite{wu2018moleculenet}  & 21.5 & 1.26 & 0.074 & 0.192 & 0.159 & 0.296 & 0.214 \\
DS-CI+ & $\textit{7.650} \pm 0.399$ & $1.045 \pm 0.030$ & $0.069 \pm 0.005$ & $0.172 \pm 0.009$ & $0.119 \pm 0.005$ & $0.160 \pm 0.011$ & $0.189 \pm 0.011$ \\
GINE-0     & $12.812 \pm 0.372$ & $1.034 \pm 0.027$ & $0.064 \pm 0.002$ & $0.197 \pm 0.007$ & $0.072 \pm 0.002$ & $0.143 \pm 0.003$ & $0.212 \pm 0.005$ \\
GINE-0.5   & $12.171 \pm 0.543$ & $1.030 \pm 0.016$ & $0.068 \pm 0.002$ & $0.207 \pm 0.004$ & $0.080 \pm 0.002$ & $0.143 \pm 0.006$ & $0.240 \pm 0.007$ \\
GINE-1     & $11.170 \pm 0.337$ & $1.000 \pm 0.015$ & $\textit{0.064} \pm 0.001$ & $0.177 \pm 0.005$ & $0.093 \pm 0.005$ & $0.120 \pm 0.004$ & $0.200 \pm 0.005$ \\
GINE-2     & $10.349 \pm 0.590$ & $0.998 \pm 0.019$ & $0.067 \pm 0.002$ & $\textit{0.147} \pm 0.004$ & $\textbf{0.063} \pm 0.001$ & $\textit{0.116} \pm 0.006$ & $\textit{0.176} \pm 0.009$ \\
GINE-2.5   & $11.306 \pm 0.677$ & $\textit{0.969} \pm 0.013$ & $0.067 \pm 0.001$ & $0.168 \pm 0.006$ & $\textit{0.066} \pm 0.002$ & $0.131 \pm 0.004$ & $0.193 \pm 0.005$ \\
GINE-Gt & $\textbf{6.700} \pm 0.183$ & $\textbf{0.955} \pm 0.011$ & $\textbf{0.062} \pm 0.001$ & $\textbf{0.131} \pm 0.005$ & $0.067 \pm 0.001$ & $\textbf{0.111} \pm 0.003$ & $\textbf{0.151} \pm 0.005$ \\
\midrule
% \vspace{0.5em}
\textbf{MAE $\downarrow$} & Affinity & HOMO & LUMO & HOMO & LUMO & Polarizability & Polarizability \\
 & ZINDO & KS & KS & GW & GW & PBE0 & SCS \\
\midrule
KRR \cite{wu2018moleculenet}  & 0.404 & 0.272  & 0.239  & 0.294  & 0.236  & 0.225  & 0.116 \\
DS-CI & $0.404 \pm 0.047$ & $0.302 \pm 0.009$ & $0.225 \pm 0.010$ & $0.329 \pm 0.016$ & $0.213 \pm 0.008$ & $0.255 \pm 0.015$ & $0.114 \pm 0.008$ \\
DTNN \cite{wu2018moleculenet} & 0.174& \textit{0.155}  & 0.129  & 0.166  & 0.139  & 0.173  & 0.149 \\
DS-CI+& $\textit{0.122} \pm 0.002$ & $0.169 \pm 0.007$ & $0.135 \pm 0.007$ & $0.183 \pm 0.005$ & $0.139 \pm 0.004$ & $0.139 \pm 0.005$ & $\textit{0.088} \pm 0.004$ \\
GINE-0     & $0.082 \pm 0.002$ & $0.184 \pm 0.008$ & $0.109 \pm 0.005$ & $0.198 \pm 0.008$ & $0.116 \pm 0.004$ & $0.170 \pm 0.006$ & $0.094 \pm 0.003$ \\
GINE-0.5   & $0.087 \pm 0.002$ & $0.207 \pm 0.007$ & $0.103 \pm 0.003$ & $0.234 \pm 0.010$ & $0.129 \pm 0.007$ & $0.189 \pm 0.004$ & $0.102 \pm 0.002$ \\
GINE-1     & $0.088 \pm 0.003$ & $0.176 \pm 0.005$ & $0.096 \pm 0.002$ & $0.201 \pm 0.004$ & $0.118 \pm 0.003$ & $0.171 \pm 0.004$ & $0.104 \pm 0.003$ \\
GINE-2     & $\textbf{0.067} \pm 0.002$ & $0.163 \pm 0.006$ & $\textbf{0.080} \pm 0.002$ & $\textit{0.166} \pm 0.003$ & $\textit{0.106} \pm 0.002$ & $\textit{0.135} \pm 0.003$ & $0.092 \pm 0.005$ \\
GINE-2.5   & $\textit{0.070} \pm 0.002$ & $0.162 \pm 0.006$ & $0.086 \pm 0.004$ & $0.180 \pm 0.005$ & $0.112 \pm 0.002$ & $0.142 \pm 0.004$ & $\textit{0.087} \pm 0.003$ \\
GINE-Gt & $0.073 \pm 0.002$ & $\textbf{0.133} \pm 0.002$ & $\textit{0.084} \pm 0.001$ & $\textbf{0.148} \pm 0.003$ & $\textbf{0.101} \pm 0.002$ & $\textbf{0.098} \pm 0.002$ & $\textbf{0.071} \pm 0.002$  \\
\bottomrule
\end{tabular}
\end{table*}

We consider the QM7b benchmark~\cite{blum2009970, montavon2013machine}, which contains $7\,211$ molecules with $14$ regression targets.  
Each molecule is encoded by a $n\times n$ Coulomb matrix $X$ whose entries depend only on nuclear charges $Z_i\!\in\!\mathbb{R}$ and 3D coordinates $R_i\!\in\!\mathbb{R}^3$:
\begin{equation}
    X_{ij}
=
\begin{cases}
0.5\,Z_i^{2.4}, & i=j,\\[2pt]
\dfrac{Z_i Z_j}{\lVert R_i - R_j\rVert}, & i\neq j,
\end{cases}
\end{equation}
Then we build the graph $\mathcal{G} = (\mathcal{V}, \mathcal{E})$ with $\mathcal{V}=\{1,\dots,n\}$ and $\mathcal{E}=\{(i,j)\mid i\neq j\}$, i.e.\ all atom pairs are connected while self–loops $(i,i)$ are removed. Notably, since its off-diagonal entries are computed from inter-atomic distances, the Coulomb matrix is invariant to rotations and translations (i.e., E(3)-invariant) by construction.

\subsection{Graph Construction}
Graphs are constructed from the molecule's Coulomb matrix. The baseline models, denoted GINE-$c$, operate on a single graph formed by applying a threshold $c$, where edges are all pairs $(i,j)$ with an interaction strength $X_{ij} \geq c$. This method discards all information below the threshold.

Our main model, GINE-Gt, operates on a multi-view graph tuple $(\mathcal{G}_1, \mathcal G_2)$ derived by partitioning the interaction space at a boundary of $c=2$ selected over a validation set (see Table~\ref{tab:mole_results}). The strong-connection graph ($\mathcal G_1$) is thus composed of edges where $X_{ij} \geq 2$, while the weak-connection graph ($\mathcal G_2$) comprises all remaining edges. This threshold effectively identifies the primary interaction backbone for the strong-connection graph ($\mathcal G_1$) while assigning the remaining contextual interactions to the weak-connection graph ($\mathcal G_2$).

The percentage of retained edges, along with full implementation details such as feature construction, model configurations, and training protocols, are provided in Appendix~\ref{app:Molecular Property Prediction}.

\subsection{Results and Analysis} \label{subsec: Molecular Property Prediction}
Table~\ref{tab:mole_results} presents the performance comparison of our GINE-Gt model against multiple baselines. These include a series of single-graph GINE-$c$ models as well as several non-graph-based methods: Kernel Ridge Regression (KRR), Deep Tensor Neural Network (DTNN)~\cite{schutt2017quantum}, and the state-of-the-art invariant feature model, DS-CI+~\cite{blum2024learning}. Results for KRR and DTNN are taken from prior work~\cite{wu2018moleculenet}. 

Our proposed GINE-Gt model is the top-performing method, achieving the best Mean Absolute Error (MAE) on 11 of the 14 prediction targets.
The best single-graph model, GINE-2, demonstrates the importance of focusing on the strong interactions compared to the full-graph GINE-0. However, our results suggest that weak interactions are also relevant.

While our multi-view approach GINE-Gt outperforms these baselines, we note that GINE-2 achieves better performance on three targets. This suggests that these specific properties are predominantly governed by strong, short-range interactions. In such cases, the global context provided by weaker interactions offers limited benefit and may introduce a small amount of non-essential information. This highlights the potential for a more adaptive partitioning mechanism that a flexible, learnable threshold, rather than our current fixed one, could allow the model to dynamically balance the two views and further improve the multi-view graph-tuple framework's performance.

% \begin{figure}[t]
%   \centering
%   \begin{subfigure}[t]{0.49\textwidth}
%     \centering
%     \includegraphics[width=0.49\linewidth]{figures/CAMELS/om_1.png}\hfill
%     \includegraphics[width=0.49\linewidth]{figures/CAMELS/om_2.png}
%     \vspace{-0.6em}
%     \caption{CAMELS: $\Omega_{\rm m}$}
%     \label{fig:camels-om}
%   \end{subfigure}
%   \hfill
%   \begin{subfigure}[t]{0.49\textwidth}
%     \centering
%     \includegraphics[width=0.49\linewidth]{figures/CAMELS/sigma_8_1.png}\hfill
%     \includegraphics[width=0.49\linewidth]{figures/CAMELS/sigma_8_2.png}
%     \vspace{-0.6em}
%     \caption{CAMELS: $\sigma_8$}
%     \label{fig:camels-s8}
%   \end{subfigure}

% \vspace{0.8em}
%   \centering
%   \begin{subfigure}[t]{0.49\textwidth}
%     \centering
%     \includegraphics[width=0.49\linewidth]{figures/CAMELS-SAM/om_1.png}\hfill
%     \includegraphics[width=0.49\linewidth]{figures/CAMELS-SAM/om_2.png}
%     \vspace{-0.5em}
%     \caption{CAMELS–SAM: $\Omega_{\rm m}$}
%     \label{fig:camels-sam-om}
%   \end{subfigure}
%   \hfill
%   \begin{subfigure}[t]{0.49\textwidth}
%     \centering
%     \includegraphics[width=0.49\linewidth]{figures/CAMELS-SAM/sigma_8_1.png}\hfill
%     \includegraphics[width=0.49\linewidth]{figures/CAMELS-SAM/sigma_8_2.png}
%     \vspace{-0.5em}
%     \caption{CAMELS–SAM: $\sigma_8$}
%     \label{fig:camels-sam-s8}
%   \end{subfigure}

\begin{figure}[t]
  \centering
  \begin{subfigure}[t]{0.49\textwidth}
    \centering
    \includegraphics[width=1\linewidth]{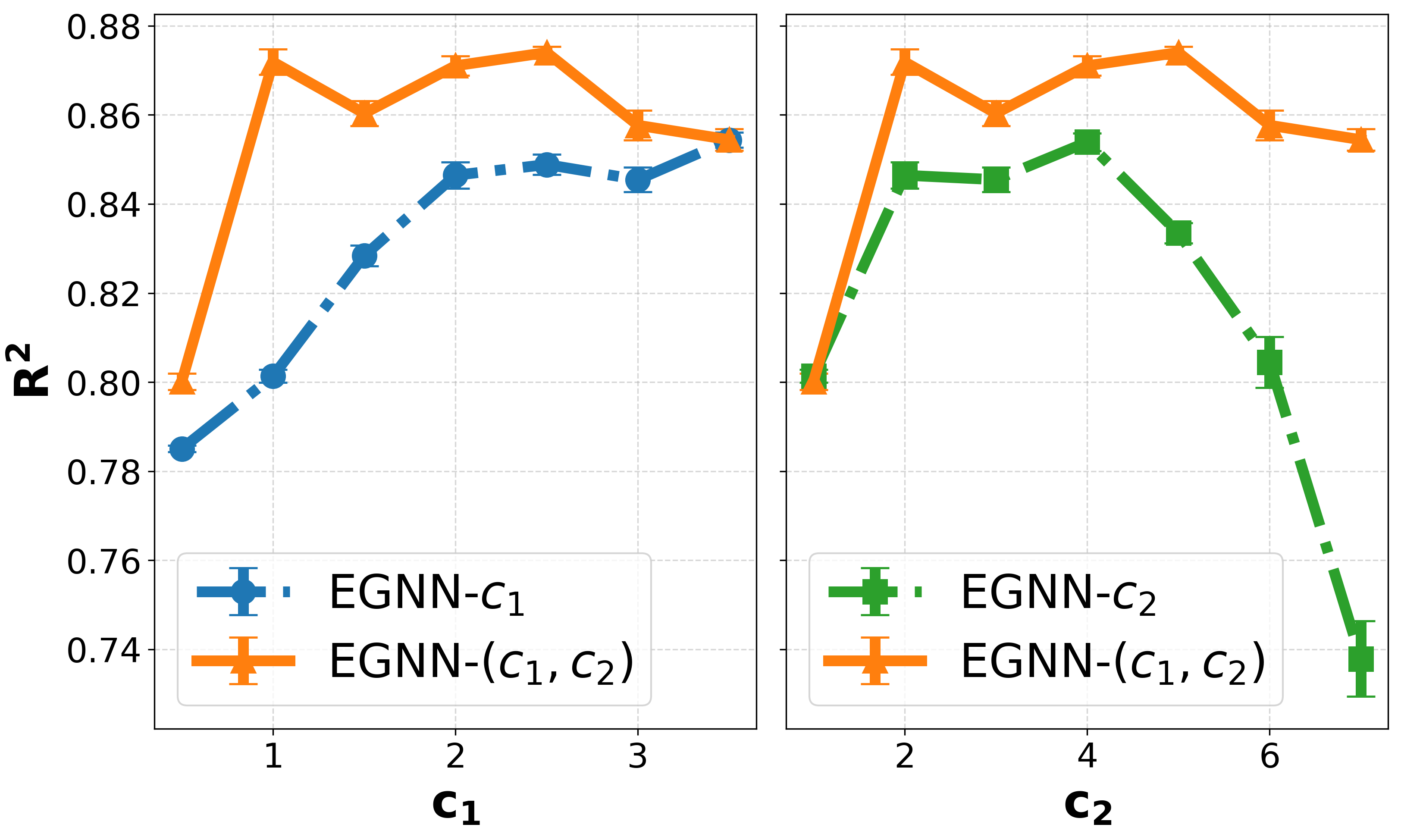}
    \vspace{-1.6em}
    \caption{CAMELS: $\Omega_{\rm m}$}
    \label{fig:camels-om}
  \end{subfigure}
  \hfill
  \begin{subfigure}[t]{0.49\textwidth}
    \centering
    \includegraphics[width=1\linewidth]{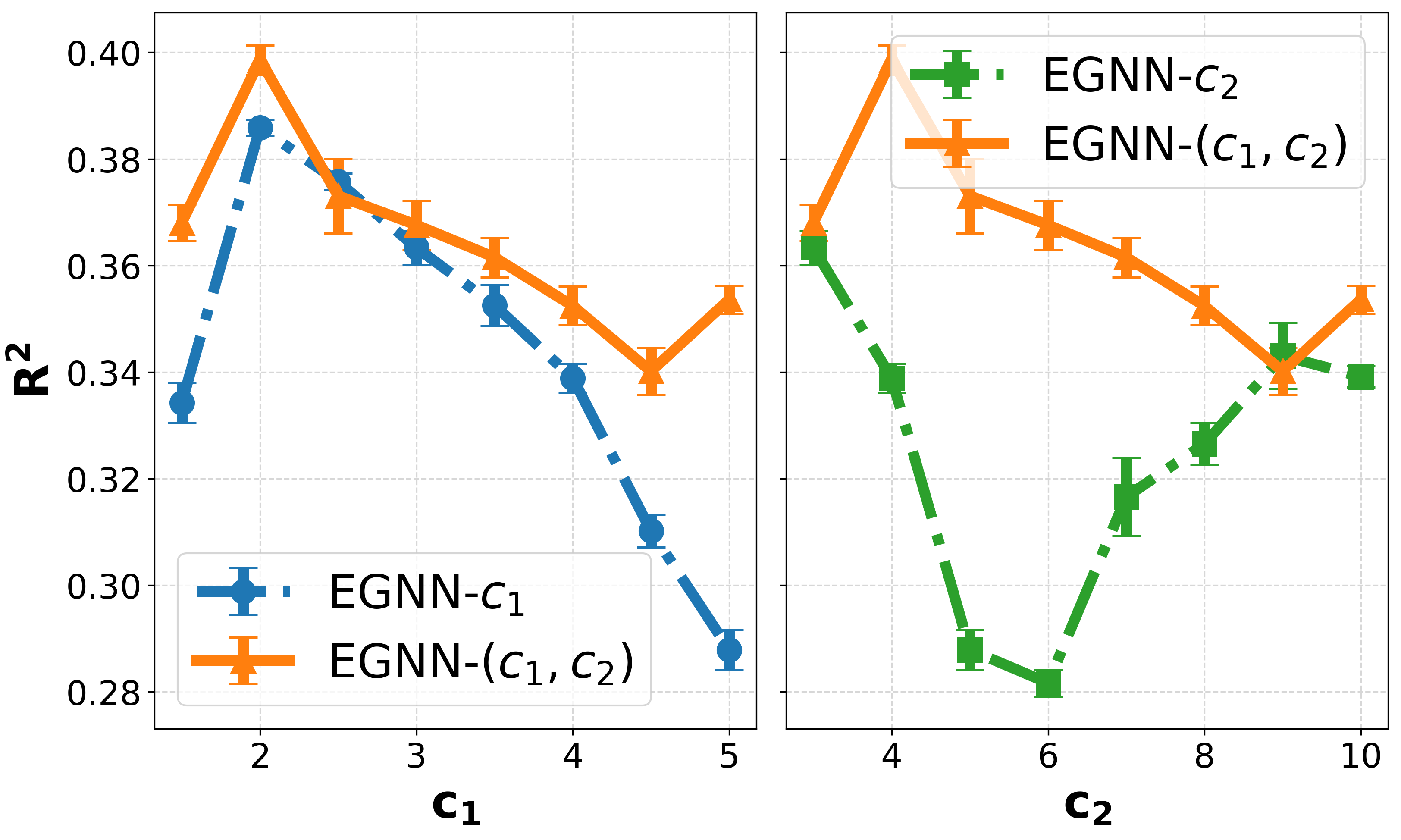}
    \vspace{-1.6em}
    \caption{CAMELS: $\sigma_8$}
    \label{fig:camels-s8}
  \end{subfigure}

\vspace{0.8em}
  \centering
  \begin{subfigure}[t]{0.49\textwidth}
    \centering
    \includegraphics[width=1\linewidth]{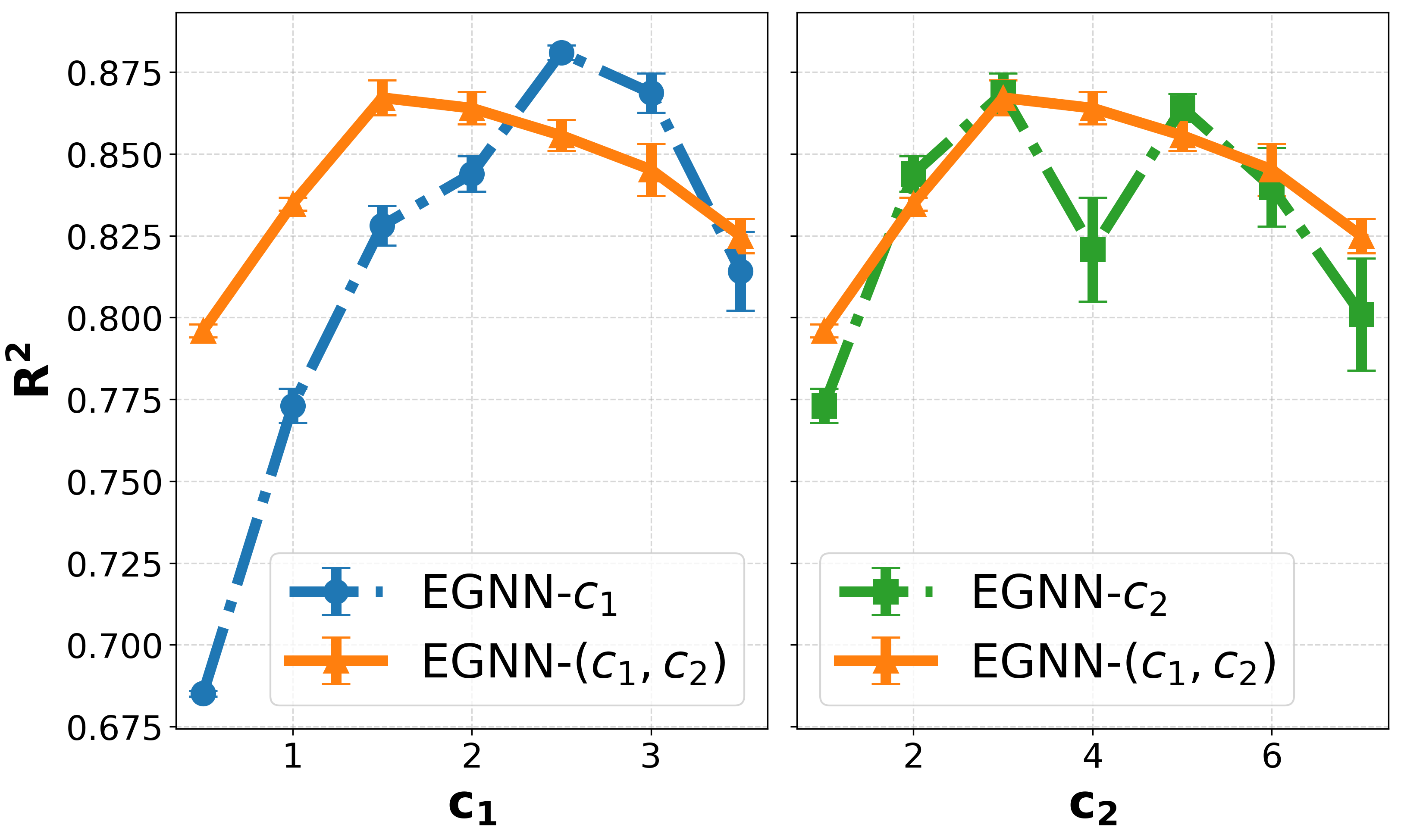}
    \vspace{-1.6em}
    \caption{CAMELS-SAM: $\Omega_{\rm m}$}
    \label{fig:camels-sam-om}
  \end{subfigure}
  \hfill
  \begin{subfigure}[t]{0.49\textwidth}
    \centering
    \includegraphics[width=1\linewidth]{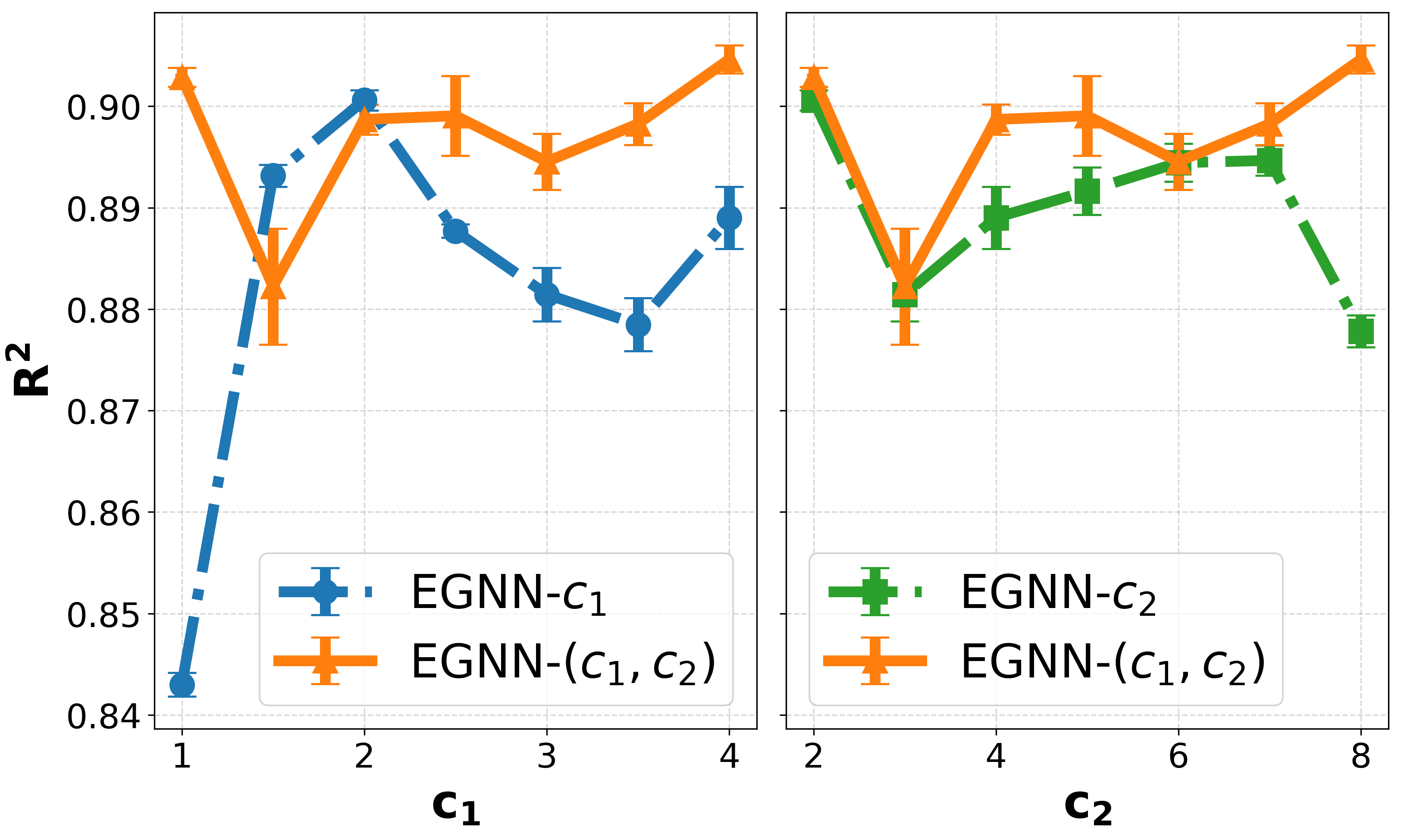}
    \vspace{-1.6em}
    \caption{CAMELS-SAM: $\sigma_8$}
    \label{fig:camels-sam-s8}
  \end{subfigure}
  
  \caption{Performance comparison on the CAMELS and CAMELS-SAM datasets for cosmological parameter prediction. We plot the coefficient of determination ($R^2$, higher is better) for our multi-view EGNN-Gt model against two single-graph baselines. The left plot in each subfigure compares against the strong-connection baseline (EGNN-$c_1$), while the right plot compares against the dense-graph baseline (EGNN-$c_2$). All results are reported as the mean and standard error over 10 runs.}
  \label{fig:cosmoresults}
\end{figure}

\section{Cosmological Parameter Prediction}
\label{sec: Cosmological Parameter Inference}
We evaluate our EGNN-Gt model on the CosmoBench benchmark~\cite{huang2025cosmobench}, specifically using the CAMELS (TNG) and the CAMELS-SAM cosmological point cloud datasets. Each sample in these datasets is a cosmological simulation cloud where nodes represent dark matter halos or galaxies. The model input is the matrix of 3D halo (galaxy) positions representing their present-day configuration, $X \in \mathbb{R}^{N \times 3}$, where $N$ is the number of halos in the point cloud. 

The primary task is a cloud-level regression problem, where the model infer from the present-day positions $X$ about the cosmological parameters $y = (\Omega_m, \sigma_8)$ that control the evolution of halos (galaxies). The performance on this task is measured using the coefficient of determination ($R^2$), defined as:
\begin{equation}
\label{eq:r_squared}
R^2 = 1 - \frac{\sum_{i=1}^{n_{\text{test}}} (f(X_i) - y_i)^2}{\sum_{i=1}^{n_{\text{test}}} (\bar{y} - y_i)^2},
\end{equation}
where $R^2$ evaluates the cosmology parameter prediction and higher $R^2$ indicates better model fit. 
\subsection{Graph Construction}
Insights from prior work~\cite{huang2025cosmobench} and our own preliminary experiments suggest that informative halo interactions typically occur within a radius range approximately from 0 to 10~Mpc/h. To establish strong baselines, we evaluated two types of single-graph models. First, we form a series of strong-connection baseline, EGNN-$c_1$ by connecting halos within a cutoff radius $c_1$. Second, we create dense-graph baselines, EGNN-$c_2$ using a larger radius $c_2 > c_1$. For our EGNN-Gt model, we implement the multi-view approach by partitioning the interaction space with two radii: a strong-connection radius $c_1$ and a weak-connection radius $c_2$. To explore the benefits of multi-view processing while maintaining a simple and interpretable relationship between the two scales, we enforce a fixed ratio by setting $c_2 = 2c_1$ in all our experiments. This allows the model to simultaneously capture immediate local neighborhoods ($\mathcal{G}_1$) and broader, second-order contextual regions ($\mathcal{G}_2$). The specific ranges for the systematic search over $c_1$ for each task, along with full implementation details, are provided in Appendix~\ref{app:Cosmological Parameter Inference}.

\subsection{Results and Analysis}\label{subsec: Cosmological Parameter Prediction}
Using the search ranges for $c_1$ established in our experimental setup, we present the performance of our multi-view EGNN-Gt model in Figure~\ref{fig:cosmoresults}, with all results shown as mean $R^2$ values with standard errors. Each row corresponds to a dataset, and each column to a target cosmological parameter. The plots in each subfigure provide a direct comparison: the left plot shows the performance of our multi-view model, EGNN-($c_1, c_2$), against its corresponding strong-connection baseline, EGNN-$c_1$, as a function of the radius $c_1$. The right plot shows a similar comparison against the dense-graph baseline EGNN-$c_2$ as a function of the radius $c_2$.

Across both datasets and target parameters, our multi-view EGNN-Gt model achieve better performance than single-graph baselines in most cases. Specifically, on the CAMELS dataset (Figures~\ref{fig:camels-om} and~\ref{fig:camels-s8}), our multi-view method EGNN-Gt outperforms the corresponding single-graph baselines, i.e., strong-connection and dense-graph baselines (blue and green lines) at nearly every tested radius, with performance being, at worst, comparable in very few instances.

On the CAMELS-SAM dataset (Figures~\ref{fig:camels-sam-om} and~\ref{fig:camels-sam-s8}), our EGNN-Gt model still exhibits better performance at most tested radius. However, there are instances where the single-graph baselines achieve better results. This does not indicate a failure of the multi-view method. Instead, we attribute this to the constraint of our fixed ratio, $c_2=2c_1$. The number of edges in a radius graph grows non-linearly with the radius (approximately as $R^3$ in 3D space). Therefore, a simple linear scaling between the two radii may not always capture the optimal balance of information density from the strong- and weak-connection graphs. This suggests that exploring adaptive or non-linear relationships between $c_1$ and $c_2$ is a promising direction for future work.

A second key design parameter is the number of partitions in the graph-tuple. In this work we intentionally instantiate the graph-tuple with $k = 2$ views corresponding to the strong and weak graphs. This two-scale design already captures the main separation between primary and contextual interactions in our dense graphs, while keeping the number of intra- and inter-scale message-passing paths manageable. Allowing more than two partitions would increase computational and tuning complexity by introducing additional operators and paths, but it could provide greater flexibility to capture diverse interactions. Beyond multi-scale analysis, our framework is also naturally suited to heterogeneous graphs, where each of the $k$ views may correspond to a distinct relation type. Systematically exploring these higher-order, multi-relational graph-tuples is another promising avenue for future research.

\section{Conclusion}\label{sec: Conclusion}
In this work, we introduced the multi-view graph-tuple framework to address a fundamental challenge of applying GNNs to data with continuous relationships. Standard single-graph approaches face a difficult trade-off: either constructing a weak-connection graph via thresholding, which inevitably discards contextual information, or using the full-graph (complete) graph, which often incurs higher computational costs. Our framework resolves this limitation by explicitly partitioning the interaction space into a graph tuple, comprising a strong-connection graph for primary interactions and a weak-connection graph for global context, and performing (heterogenous) message-passing in parallel to maintain efficiency. We theoretically show the expressivity improvements of our multi-view graph-tuple model over the single-graph models. We also empirically validate our framework through experiments on molecular property prediction and cosmological parameter prediction, showing that our multi-view approach can achieve an overall better performance against single-graph baselines.
%, our GINE-Gt outperforms all non-GNN baselines on every target and surpasses the suite of single-graph GNNs on the vast majority of targets. Similarly, on cosmological simulations, our equivariant EGNN-Gt achieves

%Our analysis suggests a promising direction for future work. 
As a proof of concept, we create multiple graph views using a fixed partitioning strategy depending on the edge feature values, which may be sub-optimal, as observed in the cosmological parameter prediction experiments. %The observation that single-graph models occasionally perform better on scale-specific tasks indicates that our fixed partitioning strategy is not optimal. 
Future work could explore adaptive mechanisms, such as a learnable threshold or flexible relationships between scales, to allow the multi-view graph-tuple framework to tailor its structure to the specific task and data. Another interesting direction is to apply our framework for other dense-graph applications, such as brain connectomes and combinatorial optimization problems.

\bibliographystyle{unsrtnat}
\bibliography{reference}

\if0\blind
{
 \clearpage
  \onecolumn

\newpage
\appendix

\section{Full Details for Theoretical Analysis}
\label{app:theory_details}

This section provides the complete definitions, lemmas, and proofs for the theoretical results presented in Section~\ref{sec:theory_main}.

\subsection{Setup and Model Classes}

Let $S_1, S_2 \in \mathbb{R}^{n\times n}$ be two graph shift operators on a common node set, representing two different connection strenghts (e.g., strong and weak). We also define a dense-graph operator $S_{\mathrm{dense}} = S_1 + S_2$ which includes both the strong and weak connections. For a fixed polynomial degree bound $m \in \mathbb{N}$, we consider three classes of linear graph filters.

\begin{definition}[Filter Classes]\label{def:classes}
The strong-connection ($S_1$-based), dense-graph, and multi-view graph tuple filter classes are defined as:
\[
\begin{aligned}
    H_1(m)      &= \Big\{\, p(S_1) = \sum_{k=0}^m a_k S_1^k \,\Big\}, \\
    H_0(m)      &= \Big\{\, q(S_{\mathrm{dense}}) = \sum_{k=0}^m b_k (S_1+S_2)^k \,\Big\}, \\
    H_{\mathrm{Gt}}(m) &= \Big\{\, g(S_1,S_2) = \sum_{w \in \mathcal{W}_{\le m}} c_w \, w(S_1,S_2) \,\Big\},
\end{aligned}
\]
where $\mathcal{W}_{\le m}$ is the set of all words of length up to $m$ formed from the symbols $\{1,2\}$, and $w(S_1, S_2) = \prod_{t=1}^{|w|} S_{w_t}$.
\end{definition}

\begin{remark}[Degrees of Freedom]
The classes $H_1(m)$ and $H_0(m)$ are defined by $m+1$ free parameters each. In contrast, the multi-view graph-tuple class $H_{\mathrm{Gt}}(m)$ is defined by $2^{m+1}-1$ parameters, reflecting its ability to assign an independent coefficient to every possible path of length up to $m$.
\end{remark}

\subsection{Expressivity and Risk Dominance}

Our analysis is grounded in a standard linear data model and the corresponding prediction risk.

\begin{assumption}[Data model and risk]\label{as:data}
The input $x\in\mathbb R^n$ is zero-mean with covariance $\Sigma\succ0$. For a linear predictor $M\in\mathbb R^{n\times n}$ and target $y^\star$, we define the risk as
\[
R(M)=\mathbb E\big[\|Mx-y^\star\|_2^2\big].
\]
We assume $y^\star=M^\star x+\varepsilon$ with $\mathbb E[\varepsilon\mid x]=0$ for some oracle predictor $M^\star \in \overline{H_{\mathrm{Gt}}(m)}$, where $ \overline{H_{\mathrm{Gt}}(m)}$ is the closure of $H_{\mathrm{Gt}}(m)$.
\end{assumption}

For matrices $A,B\in\mathbb R^{n\times n}$, we define the weighted inner product, Frobenius norm, and the corresponding distance:
\[
\langle A,B\rangle_\Sigma=\mathrm{tr}(B^\top A\,\Sigma),\qquad
\|A\|_{\Sigma,F}^2=\langle A,A\rangle_\Sigma, \qquad
\mathrm{dist}_\Sigma(A, B) = \|A - B\|_{\Sigma,F}.
\]
We use $\mathrm{dist}_\Sigma(M, \mathcal{C}) = \inf_{M' \in \mathcal{C}} \mathrm{dist}_\Sigma(M, M')$ to denote the distance from a matrix $M$ to a set $\mathcal{C}$.

Our analysis relies on two lemmas. The first recasts the prediction risk as a best approximation problem in a matrix space, and the second provides a combinatorial expansion.

\begin{lemma}[Risk Decomposition]
\label{lem:risk_decomp}
Under Assumption~\ref{as:data}, for any predictor $M$, the risk can be decomposed as $R(M) = R(M^\star) + \|M - M^\star\|_{\Sigma,F}^2$.
\end{lemma}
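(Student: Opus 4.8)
The plan is to treat this as a standard bias--variance decomposition: substitute the oracle model into the squared error, expand, show the cross term vanishes, and rewrite the remaining quadratic form through the $\Sigma$-weighted inner product.

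First I would substitute $y^\star = M^\star x + \varepsilon$ into the definition $R(M) = \mathbb{E}\big[\|Mx - y^\star\|_2^2\big]$. Writing $A := M - M^\star$, the residual becomes $Mx - y^\star = Ax - \varepsilon$, so that $\|Mx - y^\star\|_2^2 = \|Ax\|_2^2 - 2\langle Ax, \varepsilon\rangle + \|\varepsilon\|_2^2$. Taking expectations splits the risk into three terms, and the goal is to identify them as $\|M-M^\star\|_{\Sigma,F}^2$, a vanishing cross term, and $R(M^\star)$.

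The key step is to kill the cross term using the law of total expectation. Conditioning on $x$, the factor $Ax$ is deterministic, so $\mathbb{E}\big[\langle Ax, \varepsilon\rangle\big] = \mathbb{E}\big[\langle Ax, \mathbb{E}[\varepsilon \mid x]\rangle\big] = 0$ by the hypothesis $\mathbb{E}[\varepsilon \mid x] = 0$; this is precisely where the conditional (rather than merely unconditional) mean-zero noise assumption is essential. For the quadratic term I would use the trace identity $\|Ax\|_2^2 = \mathrm{tr}(A^\top A\, x x^\top)$ together with linearity of expectation to obtain $\mathbb{E}\big[\|Ax\|_2^2\big] = \mathrm{tr}(A^\top A\, \Sigma)$, since $\mathbb{E}[x x^\top] = \Sigma$ for the zero-mean input. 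By the definition of the weighted norm this equals $\langle A, A\rangle_\Sigma = \|M - M^\star\|_{\Sigma,F}^2$. Finally, the last term $\mathbb{E}\big[\|\varepsilon\|_2^2\big]$ is exactly $R(M^\star)$, as one sees by evaluating the risk at $M = M^\star$, where the residual reduces to $-\varepsilon$.

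Combining the three pieces yields $R(M) = R(M^\star) + \|M - M^\star\|_{\Sigma,F}^2$. There is no genuine obstacle in this argument; the only points demanding care are the vanishing of the cross term, which hinges on the conditional-mean-zero hypothesis, and correct bookkeeping in the trace manipulation so that it matches the paper's convention $\langle A, B\rangle_\Sigma = \mathrm{tr}(B^\top A\, \Sigma)$.
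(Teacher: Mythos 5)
Your proposal is correct and follows essentially the same route as the paper's proof: expand $\|(M-M^\star)x-\varepsilon\|_2^2$, kill the cross term via $\mathbb{E}[\varepsilon\mid x]=0$, and identify the quadratic term with $\|M-M^\star\|_{\Sigma,F}^2$ and the noise term with $R(M^\star)$. You simply spell out the tower-property and trace-identity steps that the paper leaves implicit.
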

\begin{proof}
By definition, $R(M) = \mathbb{E}[\|(M-M^\star)x - \varepsilon\|_2^2]$. Expanding this and taking the expectation, the cross-term $\mathbb{E}[\langle (M-M^\star)x, \varepsilon \rangle]$ vanishes due to the condition $\mathbb{E}[\varepsilon|x]=0$. The remaining terms are $\mathbb{E}[\|(M-M^\star)x\|_2^2] = \|M-M^\star\|_{\Sigma,F}^2$ and $\mathbb{E}[\|\varepsilon\|_2^2] = R(M^\star)$, which yields the result.
\end{proof}

\begin{lemma}[Noncommutative Binomial Expansion]\label{lem:nc_binomial}
For any integer $k \ge 0$, we have $(S_1+S_2)^k = \sum_{w\in\mathcal{W}_k} w(S_1,S_2)$, where $\mathcal{W}_k$ is the set of words of length $k$.
\end{lemma}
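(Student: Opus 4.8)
The plan is to prove the identity by induction on $k$, relying only on the associativity and distributivity of matrix multiplication and never on commutativity, which fails in general for $S_1$ and $S_2$. The central observation is that expanding $(S_1+S_2)^k$ by the distributive law produces exactly one ordered term for each way of selecting either $S_1$ or $S_2$ from each of the $k$ factors, and that such selections are in natural bijection with the words of length $k$ over the alphabet $\{1,2\}$. Crucially, we must resist collecting like terms, since distinct words generally yield genuinely distinct ordered products.

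First I would dispatch the base case $k=0$: here $(S_1+S_2)^0 = I$ by convention, while $\mathcal{W}_0$ consists of the single empty word whose associated product is the empty product $I$, so the two sides agree. For the inductive step, assuming the identity holds at level $k$, I would write $(S_1+S_2)^{k+1} = (S_1+S_2)^k (S_1+S_2)$, substitute the inductive hypothesis into the first factor, and then distribute the trailing $(S_1+S_2)$ across the sum to obtain $\sum_{w\in\mathcal{W}_k} w(S_1,S_2)\,S_1 + \sum_{w\in\mathcal{W}_k} w(S_1,S_2)\,S_2$.

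The key combinatorial step is then to identify $w(S_1,S_2)\,S_1$ with the product associated to the word formed by appending the symbol $1$ to $w$ on the right, and likewise $w(S_1,S_2)\,S_2$ with appending $2$. Since every word of length $k+1$ has a unique final symbol together with a unique length-$k$ prefix, the map $(w,s)\mapsto ws$ is a bijection from $\mathcal{W}_k \times \{1,2\}$ onto $\mathcal{W}_{k+1}$. The two partial sums therefore recombine into $\sum_{w'\in\mathcal{W}_{k+1}} w'(S_1,S_2)$, which closes the induction.

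The one place that demands care — and the main obstacle, though it is more bookkeeping than genuine difficulty — is respecting noncommutativity: I must append the new symbol on the same side (the right) as the factor $(S_1+S_2)$ being multiplied, and I must refrain from reordering or merging terms. Once this convention is fixed and used consistently, no further subtlety arises, and the result follows immediately.
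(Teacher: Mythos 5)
Your proof is correct. The paper states this lemma without proof (treating it as a standard fact), so there is no argument to compare against; your induction on $k$, with the bijection $(w,s)\mapsto ws$ between $\mathcal{W}_k\times\{1,2\}$ and $\mathcal{W}_{k+1}$ and the care to append the new symbol on the right without merging terms, is exactly the natural argument the authors leave implicit, and it fills the gap cleanly.
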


These lemmas allow us to establish our main theoretical results concerning the expressivity and risk of the multi-view graph-tuple class.

\begin{proposition}[Expressivity]\label{thm:expressivity}
For any $m \ge 0$, the multi-view graph-tuple class contains the strong-connection and dense-graph classes: $H_1(m) \subseteq H_{\mathrm{Gt}}(m)$ and $H_0(m) \subseteq H_{\mathrm{Gt}}(m)$. If the operators do not commute, i.e., $[S_1, S_2] = S1S2-S2S1 \neq  0$, and $m \ge 2$, this latter inclusion is strict: $H_0(m) \subsetneq H_{\mathrm{Gt}}(m)$.
\end{proposition}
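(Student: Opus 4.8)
The plan is to handle the proposition in three parts: the two set inclusions, each of which follows from an explicit reparametrization exhibiting the baseline class as a tied-coefficient subfamily of $H_{\mathrm{Gt}}(m)$, and the strict separation, which requires producing an explicit witness in $H_{\mathrm{Gt}}(m)\setminus H_0(m)$.

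For $H_1(m)\subseteq H_{\mathrm{Gt}}(m)$, I would observe that each monomial $S_1^k$ is itself a word operator, namely $S_1^k = w(S_1,S_2)$ for the all-ones word $w = 1^k \in \mathcal{W}_k \subseteq \mathcal{W}_{\le m}$. Hence any $p(S_1) = \sum_{k=0}^m a_k S_1^k$ is recovered inside $H_{\mathrm{Gt}}(m)$ by setting $c_{1^k}=a_k$ and all other word-coefficients to zero. For $H_0(m)\subseteq H_{\mathrm{Gt}}(m)$, I would invoke the noncommutative binomial expansion (Lemma~\ref{lem:nc_binomial}) to write $(S_1+S_2)^k = \sum_{w\in\mathcal{W}_k} w(S_1,S_2)$, so that $q(S_1+S_2) = \sum_{k=0}^m b_k (S_1+S_2)^k = \sum_{w\in\mathcal{W}_{\le m}} b_{|w|}\,w(S_1,S_2)$. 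This is precisely an element of $H_{\mathrm{Gt}}(m)$ whose coefficient $c_w = b_{|w|}$ depends only on the length of $w$; both inclusions thus amount to the statement that $H_1$ and $H_0$ are the subfamilies in which word-coefficients are forced to be tied.

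For the strict inclusion when $[S_1,S_2]\neq 0$ and $m\ge 2$, the natural witness is the commutator. Since the words $12$ and $21$ both lie in $\mathcal{W}_{\le m}$ once $m\ge 2$, the matrix $C := S_1 S_2 - S_2 S_1$ belongs to $H_{\mathrm{Gt}}(m)$ (take $c_{12}=1$, $c_{21}=-1$, all else zero), and $C\neq 0$ by hypothesis. It then remains to show that $C\notin H_0(m)$, which is the crux of the argument.

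The main obstacle is therefore to certify that $C$ cannot be written as a polynomial in the single operator $S_{\mathrm{dense}}=S_1+S_2$, and my plan is to exploit a symmetry invariant shared by every element of $H_0(m)$ but violated by $C$. Because the shift operators are adjacency matrices of undirected graphs, $S_1$ and $S_2$ are symmetric; consequently $S_{\mathrm{dense}}$ is symmetric, every power $(S_1+S_2)^k$ is symmetric, and hence every element of $H_0(m)$ is symmetric. By contrast $C^\top = (S_1S_2 - S_2 S_1)^\top = S_2 S_1 - S_1 S_2 = -C$, so $C$ is a nonzero skew-symmetric matrix, which cannot be symmetric; thus $C\notin H_0(m)$ and $H_0(m)\subsetneq H_{\mathrm{Gt}}(m)$. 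I would also flag that if symmetry of the shift operators is not assumed, one can instead use the degree-one witness $S_1\in H_{\mathrm{Gt}}(m)$: were $S_1\in H_0(m)$ a polynomial in $S_1+S_2$, it would commute with $S_1+S_2$ and hence with $S_2$, forcing $[S_1,S_2]=0$ and contradicting the hypothesis. This alternative is robust to the absence of symmetry and in fact already yields strictness for $m\ge 1$.
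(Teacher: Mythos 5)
Your two inclusions follow the paper exactly: $H_1(m)$ via the all-ones words, and $H_0(m)$ via the noncommutative binomial expansion of Lemma~\ref{lem:nc_binomial} with tied coefficients $c_w=b_{|w|}$. For strictness you also pick the same witness, the commutator $[S_1,S_2]$, but you justify $[S_1,S_2]\notin H_0(m)$ differently, and your version is actually the more careful one. The paper argues that $H_0(m)$ ``requires the coefficients of $S_1S_2$ and $S_2S_1$ to be equal,'' which is a statement about formal noncommutative polynomials; since the classes in Definition~\ref{def:classes} are sets of \emph{matrices}, distinct formal expressions can in principle evaluate to the same matrix, so one really does need a separating invariant of the matrix classes rather than of their parametrizations. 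Your symmetry argument supplies exactly that: with $S_1,S_2$ symmetric (true for the undirected adjacency matrices used throughout the paper, though not stated explicitly in the appendix setup), every element of $H_0(m)$ is symmetric while the nonzero commutator is skew-symmetric. Your fallback witness is even better: if $S_1=q(S_1+S_2)$ then $S_1$ commutes with $S_1+S_2$ and hence with $S_2$, contradicting $[S_1,S_2]\neq 0$; this needs no symmetry assumption and gives strictness already for $m\ge 1$, strengthening the stated $m\ge 2$. In short, your proof is correct, matches the paper's route for the inclusions and the choice of witness, and closes a small rigor gap in the paper's justification of the strict inclusion.
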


\begin{proof}
The inclusion $H_1(m) \subseteq H_{\mathrm{Gt}}(m)$ is trivial by construction. The inclusion $H_0(m) \subseteq H_{\mathrm{Gt}}(m)$ follows from Lemma~\ref{lem:nc_binomial}, which shows that any polynomial in $S_{\mathrm{dense}}$ is a sum over words with coefficients tied according to their length ($c_w = b_{|w|}$). The inclusion is strict under non-commutativity because an element like the commutator $[S_1, S_2]$ is in $H_{\mathrm{Gt}}(m)$ but not in $H_0(m)$, as the latter requires the coefficients of $S_1S_2$ and $S_2S_1$ to be equal.
\end{proof}

\begin{remark}[The Commuting Case]
Even if $[S_1,S_2]=0$, $H_0(m)$ generally remains a proper subset of $H_{\mathrm{Gt}}(m)$ unless $S_1$ and $S_2$ are algebraically dependent (e.g., $S_2 = cS_1$), because $H_0(m)$ still enforces coefficient tying across all same-degree terms.
\end{remark}

The greater expressivity of the multi-view graph-tuple class can translate into improved generalization performance.

\begin{proposition}[Oracle risk dominance]\label{thm:quantitative_gap}
Let $\mathcal{U}(m) = \overline{H_{\mathrm{Gt}}(m)}$ and $\mathcal{V}(m) = \overline{H_0(m)}$ be the closures of the multi-view graph-tuple and dense-graph classes. If $m \ge 2$ and the oracle predictor $M^\star$ has a non-zero component in the orthogonal complement of $\mathcal{V}(m)$ within $\mathcal{U}(m)$ (i.e., $\Pi_{\mathcal{V}(m)^\perp}(M^\star) \neq 0$), then the multi-view graph-tuple class achieves a strictly lower oracle risk. The performance gap is given precisely by:
\[
\inf_{q \in H_0(m)} R(q) - \inf_{g \in H_{\mathrm{Gt}}(m)} R(g) = \|\Pi_{\mathcal{V}(m)^\perp}(M^\star)\|_{\Sigma,F}^2 > 0.
\]
\end{proposition}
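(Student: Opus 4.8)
The plan is to translate both oracle-risk minimizations into orthogonal-projection problems in the Hilbert space $(\mathbb{R}^{n\times n},\langle\cdot,\cdot\rangle_\Sigma)$ and then exploit that $M^\star$ already lies in the graph-tuple class. The engine is Lemma~\ref{lem:risk_decomp}: for every linear predictor $M$ we have $R(M)=R(M^\star)+\|M-M^\star\|_{\Sigma,F}^2$, and since $R(M^\star)$ does not depend on $M$, minimizing the risk over any set $\mathcal{C}$ is identical to minimizing the $\Sigma$-distance to $M^\star$, giving $\inf_{M\in\mathcal{C}}R(M)=R(M^\star)+\mathrm{dist}_\Sigma(M^\star,\mathcal{C})^2$ for arbitrary $\mathcal{C}$. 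Before invoking projections I would verify that $\langle A,B\rangle_\Sigma=\mathrm{tr}(B^\top A\Sigma)$ is a genuine inner product: writing $\Sigma=LL^\top$ with $L$ invertible (possible because $\Sigma\succ0$) gives $\|A\|_{\Sigma,F}^2=\|AL\|_F^2$, which is zero only when $A=0$. This is what legitimizes the orthogonal-projection geometry used throughout.

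Next I would dispatch the two classes. Both $H_0(m)$ and $H_{\mathrm{Gt}}(m)$ are finite spans (of the matrices $(S_1+S_2)^k$ and of the words $w(S_1,S_2)$, respectively), hence finite-dimensional linear subspaces of $\mathbb{R}^{n\times n}$; in finite dimension every subspace is closed, so $\mathcal{U}(m)=H_{\mathrm{Gt}}(m)$, $\mathcal{V}(m)=H_0(m)$, and each infimum is in fact attained. For the graph-tuple class, Assumption~\ref{as:data} guarantees $M^\star\in\overline{H_{\mathrm{Gt}}(m)}=\mathcal{U}(m)$, so $\mathrm{dist}_\Sigma(M^\star,\mathcal{U}(m))=0$ and therefore $\inf_{g\in H_{\mathrm{Gt}}(m)}R(g)=R(M^\star)$. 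For the dense-graph class, the projection theorem applied to the closed subspace $\mathcal{V}(m)$ yields the orthogonal decomposition $M^\star=\Pi_{\mathcal{V}(m)}(M^\star)+\Pi_{\mathcal{V}(m)^\perp}(M^\star)$, whence $\mathrm{dist}_\Sigma(M^\star,\mathcal{V}(m))=\|\Pi_{\mathcal{V}(m)^\perp}(M^\star)\|_{\Sigma,F}$ and $\inf_{q\in H_0(m)}R(q)=R(M^\star)+\|\Pi_{\mathcal{V}(m)^\perp}(M^\star)\|_{\Sigma,F}^2$.

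Subtracting the two expressions cancels the common $R(M^\star)$ and produces exactly the claimed identity, and the hypothesis $\Pi_{\mathcal{V}(m)^\perp}(M^\star)\neq0$ makes the right-hand side strictly positive. I would add two consistency remarks: the inclusion $\mathcal{V}(m)\subseteq\mathcal{U}(m)$ from Proposition~\ref{thm:expressivity} ensures that for $M^\star\in\mathcal{U}(m)$ the projection $\Pi_{\mathcal{V}(m)^\perp}(M^\star)$ lands in $\mathcal{U}(m)\cap\mathcal{V}(m)^\perp$, matching the phrase \emph{orthogonal complement within $\mathcal{U}(m)$}; and the hypothesis $m\ge2$ enters only through Proposition~\ref{thm:expressivity}, which makes $\mathcal{U}(m)\cap\mathcal{V}(m)^\perp$ nontrivial under non-commutativity so that $\Pi_{\mathcal{V}(m)^\perp}(M^\star)\neq0$ is attainable. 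The argument is essentially bookkeeping once the geometry is fixed, so the main obstacle is the setup rather than any calculation: one must be sure that $\Sigma\succ0$ delivers a true inner product and that the filter classes really are linear subspaces, so that risk minimization collapses to a single orthogonal projection. The one place to tread carefully is the interplay between the closures and the infima; this is vacuous in finite dimension, but I would state it explicitly so that the same reduction would survive an infinite-dimensional operator formulation.
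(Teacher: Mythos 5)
Your proposal is correct and follows essentially the same route as the paper: apply Lemma~\ref{lem:risk_decomp} to turn each oracle risk into $R(M^\star)+\mathrm{dist}_\Sigma(M^\star,\mathcal{C})^2$, use $M^\star\in\overline{H_{\mathrm{Gt}}(m)}$ to kill the graph-tuple distance, and identify the remaining distance to $\mathcal{V}(m)$ with $\|\Pi_{\mathcal{V}(m)^\perp}(M^\star)\|_{\Sigma,F}$. Your added checks (that $\Sigma\succ0$ yields a genuine inner product and that the filter classes are finite-dimensional, hence closed, subspaces) are sound housekeeping the paper leaves implicit, but they do not change the argument.
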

\begin{proof}
By Lemma~\ref{lem:risk_decomp}, the minimum risk for a closed class $\mathcal{C}$ is $\inf_{M \in \mathcal{C}} R(M) = R(M^\star) + \mathrm{dist}_{\Sigma}(M^\star, \mathcal{C})^2$. Subtracting the expressions for $\mathcal{C} = \mathcal{U}(m)$ and $\mathcal{C} = \mathcal{V}(m)$ yields the risk gap:
\[
\inf_{q \in H_0(m)} R(q) - \inf_{g \in H_{\mathrm{Gt}}(m)} R(g) = \mathrm{dist}_{\Sigma}(M^\star, \mathcal{V}(m))^2 - \mathrm{dist}_{\Sigma}(M^\star, \mathcal{U}(m))^2.
\]
Since $M^\star \in \mathcal{U}(m)$ by assumption, $\mathrm{dist}_{\Sigma}(M^\star, \mathcal{U}(m))$ is zero. Since the shortest distance from $M^\star$ to the subspace $\mathcal{V}(m)$ is the norm of its component in the orthogonal complement, $\mathrm{dist}_{\Sigma}(M^\star, \mathcal{V}(m))^2 = \|\Pi_{\mathcal{V}(m)^\perp}(M^\star)\|_{\Sigma,F}^2$. Substituting this gives the claimed result. As the proposition's premise is that this projection is non-zero, the squared norm is strictly positive.
\end{proof}

\begin{corollary}[Sufficient Condition for Strict Improvement]
\label{cor:sufficient}
The condition for strict risk dominance in Proposition~\ref{thm:quantitative_gap} is satisfied if $m \ge 2$, the operators do not commute, $[S_1, S_2] \neq 0$, and the degree-2 component of $M^\star$ contains a non-zero multiple of the commutator $[S_1, S_2]$.
\end{corollary}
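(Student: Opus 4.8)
The plan is to reduce the corollary to verifying the single hypothesis of Proposition~\ref{thm:quantitative_gap}, namely $\Pi_{\mathcal{V}(m)^\perp}(M^\star)\neq 0$. Since $\mathcal{V}(m)=\overline{H_0(m)}$ is a (finite-dimensional, hence closed) subspace of $\mathcal{U}(m)$, the orthogonal projection onto $\mathcal{V}(m)^\perp$ vanishes exactly when $M^\star\in\mathcal{V}(m)$. So the entire task is to show that, under the stated conditions, $M^\star$ admits no representation as a polynomial in $S_{\mathrm{dense}}=S_1+S_2$. I would prove this by exploiting the symmetry of $H_0(m)$ under interchanging the two shift operators.

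First I would introduce the linear involution $\sigma$ that swaps the symbols $1\leftrightarrow 2$ inside every word $w(S_1,S_2)$ of length at most $m$. By Lemma~\ref{lem:nc_binomial}, each generator $(S_1+S_2)^k=\sum_{|w|=k}w(S_1,S_2)$ is the uniform sum over all length-$k$ words, a set that $\sigma$ merely permutes; hence every element of $H_0(m)$, and therefore all of $\mathcal{V}(m)$, is fixed by $\sigma$ and lies in the symmetric eigenspace $\mathcal{U}^+=\ker(\sigma-I)$. On the other hand the commutator $C=[S_1,S_2]$ equals $w_{12}-w_{21}$ in word form, so $\sigma(C)=-C$, placing $C$ in the antisymmetric eigenspace $\mathcal{U}^-=\ker(\sigma+I)$, with $C\neq 0$ by the non-commutativity hypothesis. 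I would then split $M^\star=M^++M^-$ into its $\sigma$-symmetric and $\sigma$-antisymmetric parts. Because $\sigma$ preserves word length, $M^-$ decomposes degree by degree, and its degree-$2$ block is precisely the antisymmetric part of the degree-$2$ component of $M^\star$, which by assumption is a nonzero multiple of $C$. Thus $M^-\neq 0$, so $M^\star\notin\mathcal{U}^+\supseteq\mathcal{V}(m)$; this gives $M^\star\notin\mathcal{V}(m)$ and hence $\Pi_{\mathcal{V}(m)^\perp}(M^\star)\neq 0$. Feeding this into Proposition~\ref{thm:quantitative_gap} yields the strictly positive risk gap.

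The hard part will be the well-definedness of this symmetry argument: the involution $\sigma$, the grading by word length, and the very phrase ``the degree-$2$ component of $M^\star$'' are defined on formal words, and they descend to unambiguous operations on the matrix space $\mathcal{U}(m)$ only if distinct words yield sufficiently independent matrices, so that an antisymmetric contribution appearing at degree two cannot be silently cancelled by relations at another degree. The clean sufficient condition is linear independence of $\{w(S_1,S_2):|w|\le m\}$ --- exactly the regime of the Degrees of Freedom remark, where $\dim\mathcal{U}(m)=2^{m+1}-1$. Under that independence the decomposition $M^\star=M^++M^-$ is unique and the no-cancellation step is immediate, so I would state this independence (or the weaker requirement that any matrix relations among the words respect the length grading) explicitly as part of the hypotheses, since otherwise the condition on the degree-$2$ component is not even well-posed.
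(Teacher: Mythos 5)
Your proof is correct in substance but takes a genuinely different route from the paper's. The paper argues directly that the commutator $[S_1,S_2]$ lies in the orthogonal complement $\mathcal{V}(m)^\perp$ (citing the proof of Proposition~\ref{thm:expressivity}) and concludes that any $M^\star$ containing a nonzero multiple of it has nonzero projection onto that complement. You instead observe that, since $\mathcal{V}(m)$ is a finite-dimensional (hence closed) subspace, $\Pi_{\mathcal{V}(m)^\perp}(M^\star)\neq 0$ is equivalent to $M^\star\notin\mathcal{V}(m)$, and you establish the latter via the swap involution $\sigma$: every generator $(S_1+S_2)^k$ is $\sigma$-symmetric while the commutator is $\sigma$-antisymmetric, so a nonzero antisymmetric part at degree two forces $M^\star$ out of $\mathcal{V}(m)$. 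Your detour is arguably the safer one: the paper's claim that the commutator is \emph{orthogonal} to $\mathcal{V}(m)$ is strictly stronger than what Proposition~\ref{thm:expressivity} establishes (namely that it is not \emph{in} $H_0(m)$), and with the weighted inner product $\langle A,B\rangle_\Sigma=\mathrm{tr}(B^\top A\,\Sigma)$ there is no general reason for $\mathrm{tr}\bigl((S_1+S_2)^{k\top}[S_1,S_2]\,\Sigma\bigr)$ to vanish; your argument needs only non-membership, which is all the projection criterion requires. The well-definedness caveat you raise --- that ``the degree-2 component of $M^\star$'' and the involution $\sigma$ only make unambiguous sense on the matrix space when the words $\{w(S_1,S_2):|w|\le m\}$ are linearly independent (or at least when relations respect the length grading) --- is a real issue, but it afflicts the corollary's statement and the paper's proof equally; the paper simply does not address it, so flagging it as an added hypothesis is a reasonable strengthening rather than a defect of your approach.
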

\begin{proof}[Proof]
This follows because, as established in the proof of Proposition~\ref{thm:expressivity}, the commutator $[S_1, S_2]$ is an element of the orthogonal complement $\mathcal{V}(m)^\perp$. If $M^\star$ contains a non-zero multiple of this element, its projection onto this subspace, $\Pi_{\mathcal{V}(m)^\perp}(M^\star)$, must be non-zero.
\end{proof}

In summary, the ability of our multi-view graph-tuple framework to assign distinct weights to distinct interaction paths makes it more expressive than models constrained to polynomials of a single operator. This greater expressivity guarantees a lower or equal modeling risk for target functions that satisfies our modeling assumptions: the oracle predictor $M^\star$ that is expressible within our framework (i.e., $M^\star \in \overline{H_{\mathrm{Gt}}(m)}$ in Assumption~\ref{as:data}).

These theoretical results directly apply to the linear backbone of the GNNs used in our experiments. Specifically, our analysis focuses on these linear operators and does not consider the nonlinear activation functions applied to their outputs. While a full characterization of the nonlinearities is more complex, our analysis provides a clean conceptual baseline: any nonlinear architecture built upon this backbone inherits the fundamental expressivity gap between the underlying operator classes. Extending such guarantees to fully nonlinear settings is an interesting but technically nontrivial direction that we leave for future work.

\section{Implementation Details}
\label{app:implementation_details}

\subsection{Molecular Property Prediction}
\label{app:Molecular Property Prediction}

\subsubsection{Feature Construction.}
Since the QM7b dataset is feature-scarce, we first construct node and edge features from the molecule's Coulomb matrix, $X$. Following prior work~\cite{blum2024learning}, we derive initial features directly from the Coulomb matrix entries. Specifically, we apply a "binary expansion" technique to expand the scalar diagonal entries ($X_{ii}$) and off-diagonal entries ($X_{ij}$) into 100-dimensional vectors, which serve as the initial node and edge features, respectively. These raw scalar values are then projected into a 100-dimensional hidden space by learnable encoders.

\begin{table}[htbp]
\centering
\caption{Percentage of remaining strong edges under different Coulomb thresholds ($c$).}
\label{tab:stats}
\renewcommand{\arraystretch}{1.2}
\begin{tabular}{l ccccc}
\toprule
\textbf{Threshold ($c$)} & 0 & 0.5 & 1 & 2 & 2.5 \\
\midrule
\textbf{Remaining Edges (\%)} & 100.00 & 68.02 & 50.10 & 25.89 & 24.90 \\
\bottomrule
\end{tabular}
\end{table}

\subsubsection{Model Architecture and Training.}
All GNN models are constructed with two GNN layers and a hidden dimension of 100. The MLPs within each GINEConv layer consist of two linear layers separated by a ReLU activation. The edge encoders within each path of the GINE-Gt model are implemented as single linear layers. For graph-level prediction, we apply a global mean pooling to the node features of the final GNN layer, and the resulting graph vector is passed through a 3-layer MLP with a ReLU activation to produce the final output.

For training, all models use a batch size of 128. We train the models by minimizing the L1 Loss (Mean Absolute Error) using the Adam optimizer~\cite{kingma2014adam} with an initial learning rate of $5\times10^{-3}$ and weight decay of $10^{-5}$. A cosine–plateau scheduler reduces the learning rate by a factor of 0.8 after five epochs without validation improvement (minimum $10^{-5}$). Early stopping is triggered after 20 idle epochs or when the run reaches a maximum of 1000 epochs.

\subsubsection{Evaluation Protocol and Environment.}
To ensure a robust evaluation, we employ a stratified ten-fold cross-validation scheme. For each fold, we reserve 10\% of the data for testing, while the remainder is split into a 9:1 train/validation ratio. Then we report the mean and standard error of the Mean Absolute Error (MAE) across the ten test set folds. These experiments were performed on a MacBook Air (15-inch, 2023) featuring an Apple M2 processor and 16 GB of unified memory, running macOS Ventura (13.4). All models were implemented in PyTorch.

\subsection{Cosmological Parameter Inference}
\label{app:Cosmological Parameter Inference}

\subsubsection{Feature Construction.}
For all constructed graphs, since the dark matter halos are treated as identical particles, the initial feature for each node is set to a 1-dimensional unit vector ($h_i^{(0)}=[1]$). This vector is then projected into the model's hidden dimension by an embedding layer. Edge attributes are dynamically generated by expanding the Euclidean distance between halos into a 32-dimensional feature vector using a Radial Basis Function (RBF) encoding.

\begin{table}[h!]
\centering
\caption{Search space for the strong-connection cutoff radius ($c_1$) for different datasets and target parameters. The search for $c_1$ is performed with a step of 0.5. The weak-connection radius ($c_2$) is always set to $2c_1$, so its corresponding search is performed with a step of 1.0.}
\label{tab:radii_search_space}
\begin{tabular}{@{}lccc@{}}
\toprule
\textbf{Dataset} & \textbf{Target Parameter} & \textbf{$\mathbf{c}_\mathbf{1}$ Values (Mpc/h)} & \textbf{$\mathbf{c}_\mathbf{2}$ Values (Mpc/h)} \\
\midrule
CAMELS (TNG)     & $\Omega_m$                & 0.5, 1.0, \dots, 3.5          & 1.0, 2.0, \dots, 7.0          \\
CAMELS (TNG)     & $\sigma_8$                & 1.5, 2.0, \dots, 5.0          & 3.0, 4.0, \dots, 10.0         \\
CAMELS-SAM       & $\Omega_m$                & 0.5, 1.0, \dots, 3.5          & 1.0, 2.0, \dots, 7.0          \\
CAMELS-SAM       & $\sigma_8$                & 1.0, 1.5, \dots, 4.0          & 2.0, 3.0, \dots, 8.0          \\
\bottomrule
\end{tabular}
\end{table}

\subsubsection{Model Architecture and Training.}
Our EGNN-Gt models are constructed with 3 layers and a hidden dimension of 96. The MLPs within each EGCL operator use the SiLU activation function. For the primary task of cosmological parameter prediction, a global mean pooling is applied to the final node features, and the resulting graph-level representation is passed through a 2-layer MLP to produce the output. We train all models for a maximum of 300 epochs by minimizing the Mean Squared Error (MSE) loss, using a batch size of 8. The AdamW optimizer is used with an initial learning rate of $5 \times 10^{-4}$ and a weight decay of $1 \times 10^{-5}$. The learning rate is dynamically adjusted using a ReduceLROnPlateau scheduler, which reduces it by a factor of 0.7 if the validation loss does not improve for 5 consecutive epochs, down to a minimum of $1 \times 10^{-5}$.

\subsubsection{Evaluation Protocol and Environment.}
The datasets are randomly partitioned into training (60\%), validation (20\%), and test (20\%) sets. To ensure the robustness of our findings, each experiment is repeated 10 times with different random seeds, and we report the mean and standard error of the performance metrics on the test set. All experiments were conducted on a single NVIDIA RTX 6000 Ada Generation GPU, equipped with 48 GB of VRAM.

}

%%%%%%%%%%%%%%%%%%%%%%%%%%%%%%%%%%%%%%%%%%%%%%%%%%%%%%%%%%%%

\end{document}
\newpage
\section*{TAG-DS Paper Checklist}

%%% BEGIN INSTRUCTIONS %%%
The checklist is designed to encourage best practices for responsible machine learning research, addressing issues of reproducibility, transparency, research ethics, and societal impact. Do not remove the checklist: {\bf The papers not including the checklist will be desk rejected.} The checklist should follow the references and follow the (optional) supplemental material.  The checklist does NOT count towards the page
limit. 

Please read the checklist guidelines carefully for information on how to answer these questions. For each question in the checklist:
\begin{itemize}
    \item You should answer \answerYes{}, \answerNo{}, or \answerNA{}.
    \item \answerNA{} means either that the question is Not Applicable for that particular paper or the relevant information is Not Available.
    \item Please provide a short (1–2 sentence) justification right after your answer (even for NA). 
   % \item {\bf The papers not including the checklist will be desk rejected.}
\end{itemize}

{\bf The checklist answers are an integral part of your paper submission.} They are visible to the reviewers, area chairs, senior area chairs, and ethics reviewers. You will be asked to also include it (after eventual revisions) with the final version of your paper, and its final version will be published with the paper.

The reviewers of your paper will be asked to use the checklist as one of the factors in their evaluation. While "\answerYes{}" is generally preferable to "\answerNo{}", it is perfectly acceptable to answer "\answerNo{}" provided a proper justification is given (e.g., "error bars are not reported because it would be too computationally expensive" or "we were unable to find the license for the dataset we used"). In general, answering "\answerNo{}" or "\answerNA{}" is not grounds for rejection. While the questions are phrased in a binary way, we acknowledge that the true answer is often more nuanced, so please just use your best judgment and write a justification to elaborate. All supporting evidence can appear either in the main paper or the supplemental material, provided in appendix. If you answer \answerYes{} to a question, in the justification please point to the section(s) where related material for the question can be found.

IMPORTANT, please:
\begin{itemize}
    \item {\bf Delete this instruction block, but keep the section heading ``TAG-DS Paper Checklist"},
    \item  {\bf Keep the checklist subsection headings, questions/answers and guidelines below.}
    \item {\bf Do not modify the questions and only use the provided macros for your answers}.
\end{itemize}

%%% END INSTRUCTIONS %%%

\begin{enumerate}

\item {\bf Claims}
    \item[] Question: Do the main claims made in the abstract and introduction accurately reflect the paper's contributions and scope?
    \item[] Answer: \answerYes{} % Replace by \answerYes{}, \answerNo{}, or \answerNA{}.
    \item[] Justification: We propose a multi-view graph-tuple framewwork that captures both fine-grained and contextual interactions.  We then learn multi-view representations from the graph-tuple via a heterogeneous message-passing architecture, which we formally prove is strictly more expressive and guarantees a lower oracle risk compared to single-graph message-passing models. We instantiate our framework on two scientific domains: molecular property prediction from feature-scarce Coulomb matrices and cosmological parameter inference from geometric point clouds. On both applications, our multi-view graph-tuple models demonstrate better performance than single-graph baselines.
    \item[] Guidelines:
    \begin{itemize}
        \item The answer NA means that the abstract and introduction do not include the claims made in the paper.
        \item The abstract and/or introduction should clearly state the claims made, including the contributions made in the paper and important assumptions and limitations. A No or NA answer to this question will not be perceived well by the reviewers. 
        \item The claims made should match theoretical and experimental results, and reflect how much the results can be expected to generalize to other settings. 
        \item It is fine to include aspirational goals as motivation as long as it is clear that these goals are not attained by the paper. 
    \end{itemize}

\item {\bf Limitations}
    \item[] Question: Does the paper discuss the limitations of the work performed by the authors?
    \item[] Answer: \answerYes{} % Replace by , \answerNo{}, or \answerNA{}.
    \item[] Justification: Please see Section~\ref{subsec: Molecular Property Prediction} and ~\ref{subsec: Cosmological Parameter Prediction}.
    \item[] Guidelines:
    \begin{itemize}
        \item The answer NA means that the paper has no limitation while the answer No means that the paper has limitations, but those are not discussed in the paper. 
        \item The authors are encouraged to create a separate "Limitations" section in their paper.
        \item The paper should point out any strong assumptions and how robust the results are to violations of these assumptions (e.g., independence assumptions, noiseless settings, model well-specification, asymptotic approximations only holding locally). The authors should reflect on how these assumptions might be violated in practice and what the implications would be.
        \item The authors should reflect on the scope of the claims made, e.g., if the approach was only tested on a few datasets or with a few runs. In general, empirical results often depend on implicit assumptions, which should be articulated.
        \item The authors should reflect on the factors that influence the performance of the approach. For example, a facial recognition algorithm may perform poorly when image resolution is low or images are taken in low lighting. Or a speech-to-text system might not be used reliably to provide closed captions for online lectures because it fails to handle technical jargon.
        \item The authors should discuss the computational efficiency of the proposed algorithms and how they scale with dataset size.
        \item If applicable, the authors should discuss possible limitations of their approach to address problems of privacy and fairness.
        \item While the authors might fear that complete honesty about limitations might be used by reviewers as grounds for rejection, a worse outcome might be that reviewers discover limitations that aren't acknowledged in the paper. The authors should use their best judgment and recognize that individual actions in favor of transparency play an important role in developing norms that preserve the integrity of the community. Reviewers will be specifically instructed to not penalize honesty concerning limitations.
    \end{itemize}

\item {\bf Theory assumptions and proofs}
    \item[] Question: For each theoretical result, does the paper provide the full set of assumptions and a complete (and correct) proof?
    \item[] Answer: \answerYes{} % Replace by , \answerNo{}, or \answerNA{}.
    \item[] Justification: Please see Appendix~\ref{app:theory_details}.
    \item[] Guidelines:
    \begin{itemize}
        \item The answer NA means that the paper does not include theoretical results. 
        \item All the theorems, formulas, and proofs in the paper should be numbered and cross-referenced.
        \item All assumptions should be clearly stated or referenced in the statement of any theorems.
        \item The proofs can either appear in the main paper or the supplemental material, but if they appear in the supplemental material, the authors are encouraged to provide a short proof sketch to provide intuition. 
        \item Inversely, any informal proof provided in the core of the paper should be complemented by formal proofs provided in appendix or supplemental material.
        \item Theorems and Lemmas that the proof relies upon should be properly referenced. 
    \end{itemize}

    \item {\bf Experimental result reproducibility}
    \item[] Question: Does the paper fully disclose all the information needed to reproduce the main experimental results of the paper to the extent that it affects the main claims and/or conclusions of the paper (regardless of whether the code and data are provided or not)?
    \item[] Answer: \answerYes{} % Replace by \answerYes{}, \answerNo{}, or \answerNA{}.
    \item[] Justification: Please see Appendix~\ref{app:implementation_details}.
    \item[] Guidelines:
    \begin{itemize}
        \item The answer NA means that the paper does not include experiments.
        \item If the paper includes experiments, a No answer to this question will not be perceived well by the reviewers: Making the paper reproducible is important, regardless of whether the code and data are provided or not.
        \item If the contribution is a dataset and/or model, the authors should describe the steps taken to make their results reproducible or verifiable. 
        \item Depending on the contribution, reproducibility can be accomplished in various ways. For example, if the contribution is a novel architecture, describing the architecture fully might suffice, or if the contribution is a specific model and empirical evaluation, it may be necessary to either make it possible for others to replicate the model with the same dataset, or provide access to the model. In general. releasing code and data is often one good way to accomplish this, but reproducibility can also be provided via detailed instructions for how to replicate the results, access to a hosted model (e.g., in the case of a large language model), releasing of a model checkpoint, or other means that are appropriate to the research performed.
        \item While NeurIPS does not require releasing code, the conference does require all submissions to provide some reasonable avenue for reproducibility, which may depend on the nature of the contribution. For example
        \begin{enumerate}
            \item If the contribution is primarily a new algorithm, the paper should make it clear how to reproduce that algorithm.
            \item If the contribution is primarily a new model architecture, the paper should describe the architecture clearly and fully.
            \item If the contribution is a new model (e.g., a large language model), then there should either be a way to access this model for reproducing the results or a way to reproduce the model (e.g., with an open-source dataset or instructions for how to construct the dataset).
            \item We recognize that reproducibility may be tricky in some cases, in which case authors are welcome to describe the particular way they provide for reproducibility. In the case of closed-source models, it may be that access to the model is limited in some way (e.g., to registered users), but it should be possible for other researchers to have some path to reproducing or verifying the results.
        \end{enumerate}
    \end{itemize}

\item {\bf Open access to data and code}
    \item[] Question: Does the paper provide open access to the data and code, with sufficient instructions to faithfully reproduce the main experimental results, as described in supplemental material?
    \item[] Answer: \answerNo{} % Replace by , \answerNo{}, or \answerNA{}.
    \item[] Justification: The code is in github and will be made public after the anonymity restrictions are lifted.
    \item[] Guidelines:
    \begin{itemize}
        \item The answer NA means that paper does not include experiments requiring code.
        \item Please see the NeurIPS code and data submission guidelines (\url{https://nips.cc/public/guides/CodeSubmissionPolicy}) for more details.
        \item While we encourage the release of code and data, we understand that this might not be possible, so “No” is an acceptable answer. Papers cannot be rejected simply for not including code, unless this is central to the contribution (e.g., for a new open-source benchmark).
        \item The instructions should contain the exact command and environment needed to run to reproduce the results. See the NeurIPS code and data submission guidelines (\url{https://nips.cc/public/guides/CodeSubmissionPolicy}) for more details.
        \item The authors should provide instructions on data access and preparation, including how to access the raw data, preprocessed data, intermediate data, and generated data, etc.
        \item The authors should provide scripts to reproduce all experimental results for the new proposed method and baselines. If only a subset of experiments are reproducible, they should state which ones are omitted from the script and why.
        \item At submission time, to preserve anonymity, the authors should release anonymized versions (if applicable).
        \item Providing as much information as possible in supplemental material (appended to the paper) is recommended, but including URLs to data and code is permitted.
    \end{itemize}

\item {\bf Experimental setting/details}
    \item[] Question: Does the paper specify all the training and test details (e.g., data splits, hyperparameters, how they were chosen, type of optimizer, etc.) necessary to understand the results?
    \item[] Answer: \answerYes{} % Replace by , \answerNo{}, or \answerNA{}.
    \item[] Justification: Please see Appendix~\ref{app:implementation_details}.
    \item[] Guidelines:
    \begin{itemize}
        \item The answer NA means that the paper does not include experiments.
        \item The experimental setting should be presented in the core of the paper to a level of detail that is necessary to appreciate the results and make sense of them.
        \item The full details can be provided either with the code, in appendix, or as supplemental material.
    \end{itemize}

\item {\bf Experiment statistical significance}
    \item[] Question: Does the paper report error bars suitably and correctly defined or other appropriate information about the statistical significance of the experiments?
    \item[] Answer: \answerYes{} % Replace by , \answerNo{}, or \answerNA{}.
    \item[] Justification: Please see Table~\ref{tab:mole_results} and Figure~\ref{fig:cosmoresults}.
    \item[] Guidelines:
    \begin{itemize}
        \item The answer NA means that the paper does not include experiments.
        \item The authors should answer "Yes" if the results are accompanied by error bars, confidence intervals, or statistical significance tests, at least for the experiments that support the main claims of the paper.
        \item The factors of variability that the error bars are capturing should be clearly stated (for example, train/test split, initialization, random drawing of some parameter, or overall run with given experimental conditions).
        \item The method for calculating the error bars should be explained (closed form formula, call to a library function, bootstrap, etc.)
        \item The assumptions made should be given (e.g., Normally distributed errors).
        \item It should be clear whether the error bar is the standard deviation or the standard error of the mean.
        \item It is OK to report 1-sigma error bars, but one should state it. The authors should preferably report a 2-sigma error bar than state that they have a 96\% CI, if the hypothesis of Normality of errors is not verified.
        \item For asymmetric distributions, the authors should be careful not to show in tables or figures symmetric error bars that would yield results that are out of range (e.g. negative error rates).
        \item If error bars are reported in tables or plots, The authors should explain in the text how they were calculated and reference the corresponding figures or tables in the text.
    \end{itemize}

\item {\bf Experiments compute resources}
    \item[] Question: For each experiment, does the paper provide sufficient information on the computer resources (type of compute workers, memory, time of execution) needed to reproduce the experiments?
    \item[] Answer: \answerYes{} % Replace by , \answerNo{}, or \answerNA{}.
    \item[] Justification: Please see Appendix~\ref{app:implementation_details}.
    \item[] Guidelines:
    \begin{itemize}
        \item The answer NA means that the paper does not include experiments.
        \item The paper should indicate the type of compute workers CPU or GPU, internal cluster, or cloud provider, including relevant memory and storage.
        \item The paper should provide the amount of compute required for each of the individual experimental runs as well as estimate the total compute. 
        \item The paper should disclose whether the full research project required more compute than the experiments reported in the paper (e.g., preliminary or failed experiments that didn't make it into the paper). 
    \end{itemize}
    
\item {\bf Code of ethics}
    \item[] Question: Does the research conducted in the paper conform, in every respect, with the NeurIPS Code of Ethics \url{https://neurips.cc/public/EthicsGuidelines}?
    \item[] Answer: \answerYes{}% Replace by , \answerNo{}, or \answerNA{}.
    \item[] Justification: We have reviewed the NeurIPS Code of Ethics and confirm that our research conforms to it in all respects. Our work is based on publicly available benchmark datasets (see Section~\ref{sec: Molecular Property Prediction} and~\ref{sec: Cosmological Parameter Inference}) and does not involve sensitive personal data or raise immediate societal concerns.
    \item[] Guidelines:
    \begin{itemize}
        \item The answer NA means that the authors have not reviewed the NeurIPS Code of Ethics.
        \item If the authors answer No, they should explain the special circumstances that require a deviation from the Code of Ethics.
        \item The authors should make sure to preserve anonymity (e.g., if there is a special consideration due to laws or regulations in their jurisdiction).
    \end{itemize}

\item {\bf Broader impacts}
    \item[] Question: Does the paper discuss both potential positive societal impacts and negative societal impacts of the work performed?
    \item[] Answer: \answerNo{} % Replace by \answerYes{}, \answerNo{}, or \answerNA{}.
    \item[] Justification: Our work is foundational research. We do not foresee a direct path to negative societal impacts but we cannot anticipate how foundational research could be used in the future.
    \item[] Guidelines:
    \begin{itemize}
        \item The answer NA means that there is no societal impact of the work performed.
        \item If the authors answer NA or No, they should explain why their work has no societal impact or why the paper does not address societal impact.
        \item Examples of negative societal impacts include potential malicious or unintended uses (e.g., disinformation, generating fake profiles, surveillance), fairness considerations (e.g., deployment of technologies that could make decisions that unfairly impact specific groups), privacy considerations, and security considerations.
        \item The conference expects that many papers will be foundational research and not tied to particular applications, let alone deployments. However, if there is a direct path to any negative applications, the authors should point it out. For example, it is legitimate to point out that an improvement in the quality of generative models could be used to generate deepfakes for disinformation. On the other hand, it is not needed to point out that a generic algorithm for optimizing neural networks could enable people to train models that generate Deepfakes faster.
        \item The authors should consider possible harms that could arise when the technology is being used as intended and functioning correctly, harms that could arise when the technology is being used as intended but gives incorrect results, and harms following from (intentional or unintentional) misuse of the technology.
        \item If there are negative societal impacts, the authors could also discuss possible mitigation strategies (e.g., gated release of models, providing defenses in addition to attacks, mechanisms for monitoring misuse, mechanisms to monitor how a system learns from feedback over time, improving the efficiency and accessibility of ML).
    \end{itemize}
    
\item {\bf Safeguards}
    \item[] Question: Does the paper describe safeguards that have been put in place for responsible release of data or models that have a high risk for misuse (e.g., pretrained language models, image generators, or scraped datasets)?
    \item[] Answer: \answerNA{} % Replace by \answerYes{}, \answerNo{}, or .
    \item[] Justification: This is not applicable as our research does not involve the release of models or datasets with a high risk for misuse. Our work focuses on fundamental GNN architectures for scientific discovery, and we exclusively use publicly available, standard benchmark datasets from the scientific community (see Section~\ref{sec: Molecular Property Prediction} and~\ref{sec: Cosmological Parameter Inference}) .
    \item[] Guidelines:
    \begin{itemize}
        \item The answer NA means that the paper poses no such risks.
        \item Released models that have a high risk for misuse or dual-use should be released with necessary safeguards to allow for controlled use of the model, for example by requiring that users adhere to usage guidelines or restrictions to access the model or implementing safety filters. 
        \item Datasets that have been scraped from the Internet could pose safety risks. The authors should describe how they avoided releasing unsafe images.
        \item We recognize that providing effective safeguards is challenging, and many papers do not require this, but we encourage authors to take this into account and make a best faith effort.
    \end{itemize}

\item {\bf Licenses for existing assets}
    \item[] Question: Are the creators or original owners of assets (e.g., code, data, models), used in the paper, properly credited and are the license and terms of use explicitly mentioned and properly respected?
    \item[] Answer: \answerYes{} % Replace by \answerYes{}, \answerNo{}, or \answerNA{}.
    \item[] Justification: We properly credit the creators of all assets used. The datasets and baseline models are cited with their original publication papers in Section~\ref{sec: Molecular Property Prediction} and~\ref{sec: Cosmological Parameter Inference}.
    \item[] Guidelines:
    \begin{itemize}
        \item The answer NA means that the paper does not use existing assets.
        \item The authors should cite the original paper that produced the code package or dataset.
        \item The authors should state which version of the asset is used and, if possible, include a URL.
        \item The name of the license (e.g., CC-BY 4.0) should be included for each asset.
        \item For scraped data from a particular source (e.g., website), the copyright and terms of service of that source should be provided.
        \item If assets are released, the license, copyright information, and terms of use in the package should be provided. For popular datasets, \url{paperswithcode.com/datasets} has curated licenses for some datasets. Their licensing guide can help determine the license of a dataset.
        \item For existing datasets that are re-packaged, both the original license and the license of the derived asset (if it has changed) should be provided.
        \item If this information is not available online, the authors are encouraged to reach out to the asset's creators.
    \end{itemize}

\item {\bf New assets}
    \item[] Question: Are new assets introduced in the paper well documented and is the documentation provided alongside the assets?
    \item[] Answer: \answerNA{} % Replace by \answerYes{}, \answerNo{}, or \answerNA{}.
    \item[] Justification: Our paper does not introduce any new datasets, benchmarks, or software packages intended for public release as new assets. 
    \item[] Guidelines:
    \begin{itemize}
        \item The answer NA means that the paper does not release new assets.
        \item Researchers should communicate the details of the dataset/code/model as part of their submissions via structured templates. This includes details about training, license, limitations, etc. 
        \item The paper should discuss whether and how consent was obtained from people whose asset is used.
        \item At submission time, remember to anonymize your assets (if applicable). You can either create an anonymized URL or include an anonymized zip file.
    \end{itemize}

\item {\bf Crowdsourcing and research with human subjects}
    \item[] Question: For crowdsourcing experiments and research with human subjects, does the paper include the full text of instructions given to participants and screenshots, if applicable, as well as details about compensation (if any)? 
    \item[] Answer: \answerNA{} % Replace by \answerYes{}, \answerNo{}, or .
    \item[] Justification: This is not applicable as our research does not involve crowdsourcing or experiments with human subjects.
    \item[] Guidelines:
    \begin{itemize}
        \item The answer NA means that the paper does not involve crowdsourcing nor research with human subjects.
        \item Including this information in the supplemental material is fine, but if the main contribution of the paper involves human subjects, then as much detail as possible should be included in the main paper. 
        \item According to the NeurIPS Code of Ethics, workers involved in data collection, curation, or other labor should be paid at least the minimum wage in the country of the data collector. 
    \end{itemize}

\item {\bf Institutional review board (IRB) approvals or equivalent for research with human subjects}
    \item[] Question: Does the paper describe potential risks incurred by study participants, whether such risks were disclosed to the subjects, and whether Institutional Review Board (IRB) approvals (or an equivalent approval/review based on the requirements of your country or institution) were obtained?
    \item[] Answer: \answerNA{} % Replace by \answerYes{}, \answerNo{}, or .
    \item[] Justification: This is not applicable as our research does not involve human subjects, and therefore did not require IRB approval.
    \item[] Guidelines:
    \begin{itemize}
        \item The answer NA means that the paper does not involve crowdsourcing nor research with human subjects.
        \item Depending on the country in which research is conducted, IRB approval (or equivalent) may be required for any human subjects research. If you obtained IRB approval, you should clearly state this in the paper. 
        \item We recognize that the procedures for this may vary significantly between institutions and locations, and we expect authors to adhere to the NeurIPS Code of Ethics and the guidelines for their institution. 
        \item For initial submissions, do not include any information that would break anonymity (if applicable), such as the institution conducting the review.
    \end{itemize}

\item {\bf Declaration of LLM usage}
    \item[] Question: Does the paper describe the usage of LLMs if it is an important, original, or non-standard component of the core methods in this research? Note that if the LLM is used only for writing, editing, or formatting purposes and does not impact the core methodology, scientific rigorousness, or originality of the research, declaration is not required.
    %this research? 
    \item[] Answer: \answerNA{} % Replace by \answerYes{}, \answerNo{}, or .
    \item[] Justification: This is not applicable as the development of our core methodology does not involve the use of Large Language Models (LLMs).
    \item[] Guidelines:
    \begin{itemize}
        \item The answer NA means that the core method development in this research does not involve LLMs as any important, original, or non-standard components.
        \item Please refer to our LLM policy (\url{https://neurips.cc/Conferences/2025/LLM}) for what should or should not be described.
    \end{itemize}

\end{enumerate}

\end{document}